\documentclass{bytedance_seed}
\usepackage{amsmath,amsfonts,bm}

\def\eqref#1{equation~\ref{#1}}

\def\1{\bm{1}}

\DeclareMathAlphabet{\mathsfit}{\encodingdefault}{\sfdefault}{m}{sl}
\SetMathAlphabet{\mathsfit}{bold}{\encodingdefault}{\sfdefault}{bx}{n}

\usepackage{url}
\usepackage{amsmath}
\usepackage{amssymb}
\usepackage{amsthm}

\newtheorem{lemma}{Lemma}
\newcommand{\aname}{PACE}

\usepackage{enumitem}
\usepackage{algorithm}
\usepackage{algpseudocode}

\usepackage{soul}
\usepackage{arydshln} \usepackage{listings}
\usepackage{colortbl}

\definecolor{PromptRole}{HTML}{245A81}
\definecolor{PromptTag}{HTML}{16817A}
\definecolor{PromptText}{HTML}{30343B}

\lstdefinestyle{retoolprompt}{
    basicstyle=\fontsize{8}{10}\selectfont\ttfamily
               \color{PromptText},
    columns=fullflexible,
    keepspaces=true,
    showstringspaces=false,
    breaklines=true,
    breakatwhitespace=true,
    breakindent=1em,
    tabsize=2,
    aboveskip=0pt,
    belowskip=0pt,
    frame=none,
    numbers=none,
    escapeinside={(*@}{@*)}
}

\newcommand{\promptrole}[1]{\textcolor{PromptRole}{\ttfamily\detokenize{#1}}}
\newcommand{\prompttag}[1]{\textcolor{PromptTag}{\ttfamily\detokenize{#1}}}

\definecolor{paceheader}{RGB}{240,240,240}
\definecolor{pacerow}{RGB}{232,239,250}
\newcommand{\inc}[1]{\textcolor{green!50!black}{\scriptsize~(+ #1)}}

\title{Where Does Staleness Accumulate? Pool Aware Effective Staleness Control for Asynchronous RL in LLM Post-Training}
\author[1,2]{Chenliang Li}
\author[1,\dagger]{Neiwen Ling}
\author[1]{Zijun Wei}
\author[3]{Alfredo Garcia}

\affiliation[1]{ByteDance}
\affiliation[2]{Texas A\&M University}
\affiliation[3]{University of Virginia}

\contribution[\dagger]{Corresponding author}

\date{\today}

\abstract{
Fully asynchronous reinforcement learning (RL) improves resource utilization in large language model post-training by overlapping rollout generation with policy optimization, but it also introduces policy lag as trajectories are generated and queued while the trainer continues to update. We study how this lag accumulates over a trajectory's lifetime and how it can be controlled without sacrificing the wall-clock benefits of asynchronous execution. We decompose trajectory staleness into Generation Staleness, accumulated before rollout completion, and Waiting Staleness, accumulated after a completed trajectory enters the pool. Motivated by this decomposition, we introduce PACE (Pool-Aware Control of Effective Staleness). PACE converts excess pool occupancy into an adaptive rejection budget and ranks completed trajectories using an effective-staleness score that combines Waiting Staleness with prefix-aware Generation Staleness. This avoids penalizing long or interrupted rollouts solely because they span multiple policy versions. In single-turn mathematical reasoning, PACE improves the six-benchmark average validation accuracy by 18.7\% over unfiltered asynchronous RL at the same wall-clock budget and matches synchronous RL performance with 47.1\% less GPU time. PACE also improves validation performance in multi-turn tool-integrated reasoning, outperforming both synchronous and unfiltered asynchronous RL. Further experiments with the mixture-of-experts model and an alternative RL algorithm support its applicability across model architectures and training algorithms.
}

\begin{document}
\maketitle

\section{Introduction}

Fully asynchronous~\citep{fu2025areal} reinforcement learning (RL) for large language models offers a direct system benefit: rollout workers can continue generating trajectories while the trainer updates the policy. This is particularly useful for long-horizon reasoning~\citep{gao2025beyond,feng2026longcli} and tool-using tasks~\citep{he2026search,zhou2024webarena,feng2026retool}, where autoregressive generation is expensive and completion times vary widely across prompts and tasks~\citep{zhou2025april}. In a synchronous pipeline, optimization waits for the slowest trajectories in each batch, leaving resources idle at different stages of the iteration. Fully asynchronous systems remove this batch barrier and let generation and training proceed at their own pace.

The cost of this decoupling is that training data can become stale as the policy continues to evolve. Existing work addresses the resulting mismatch between behavior and training policies through off-policy correction~\citep{degris2012off,hou2026sao,li2025stabilizing}. Recent methods further improve the use of stale trajectories by controlling importance-weighted updates. M2PO constrains the second moment of importance weights to suppress extreme updates while preserving useful training signals~\citep{zheng2025prosperity}. VCPO combines learning-rate adaptation based on effective sample size with a variance-reducing baseline to stabilize asynchronous training~\citep{huang2026stable}. These studies show that stale trajectories can remain useful for learning, provided that the resulting off-policy updates are properly controlled. The benefit of overlapping rollout generation and training therefore depends on both system utilization and learning stability.

Complementary to these algorithmic approaches, asynchronous RL systems constrain policy lag to keep training data fresh. AReaL~\citep{fu2025areal} limits how far rollout generation can run ahead of training and prioritizes older trajectories when forming training batches. DORA~\citep{hu2026dora} maintains a bounded window of policy versions and advances it only after all trajectories from the oldest version have been collected and forwarded to the trainer. These constraints introduce a tradeoff: a tighter lag limit keeps data closer in version to the training policy but can force rollout generation or training to wait, whereas a looser limit allows greater overlap at the cost of staler training data. This tradeoff motivates a complementary perspective: controlling the flow of completed trajectories through the training pool. Selectively filtering surplus completed trajectories can reduce avoidable waiting while preserving asynchronous execution. The challenge is to do so without starving the trainer or discarding useful training signals. This leads to our central question:

\begin{center}
\emph{Under a fixed allocation of rollout and trainer resources,
how can we filter completed trajectories to maintain stable policy improvement while maximizing system utilization?}
\end{center}

\begin{figure}[t]
    \centering
    \includegraphics[width=0.55\linewidth]{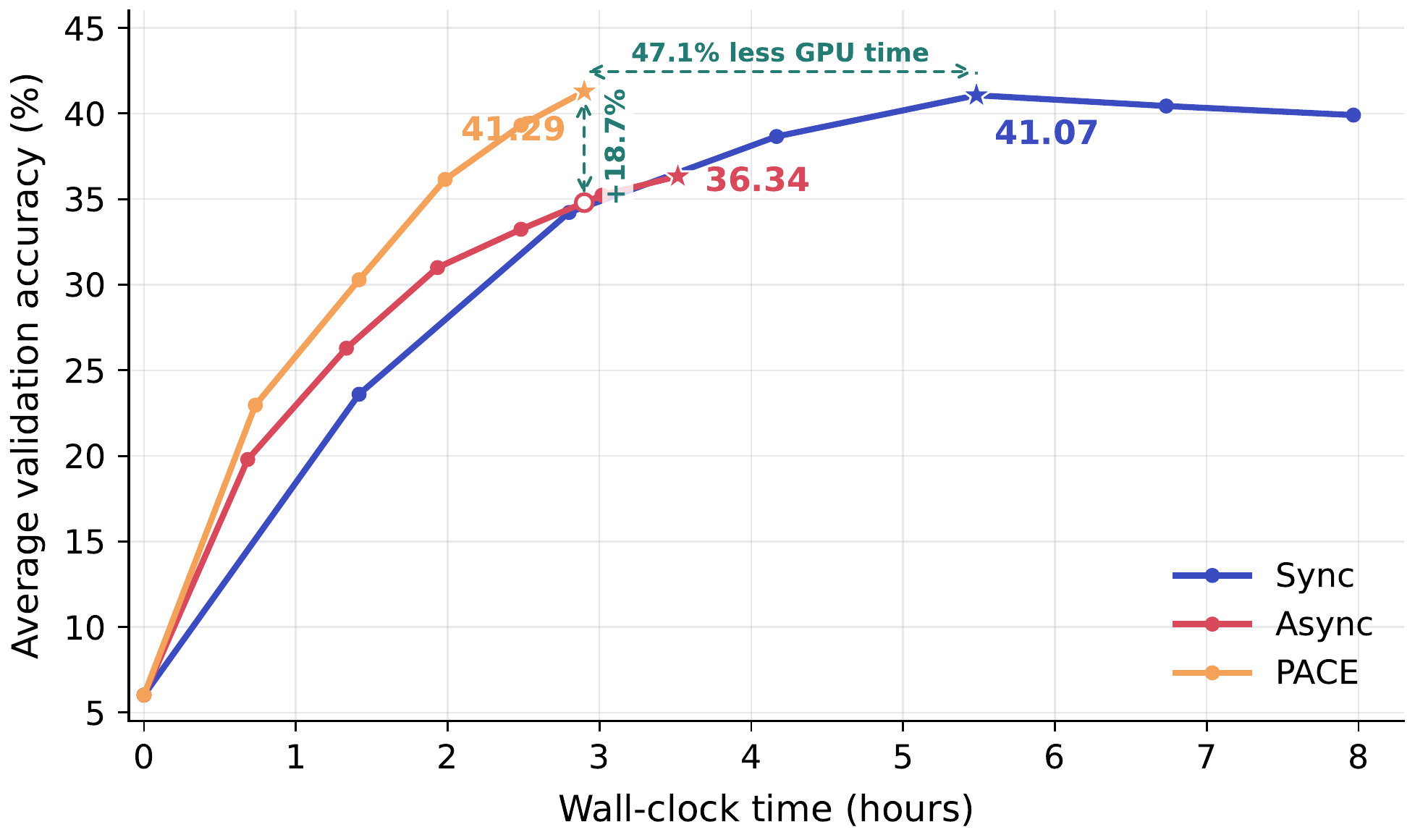}
    \caption{Average validation accuracy across six benchmarks as a function
    of cumulative logged training-step time. Markers correspond to checkpoints evaluated every 50 training steps, and stars indicate the best average accuracy achieved within the first 300 steps.   PACE reaches its peak in 2h54m, earlier than synchronous and asynchronous RL.}
    \label{fig:placeholder}

\end{figure}

Answering this question requires deciding \emph{how many} trajectories to reject and \emph{which ones}. We first decompose policy staleness into \emph{Generation Staleness}, accumulated before rollout completion, and \emph{Waiting Staleness}, accumulated in the pool afterward. In our unfiltered runs, persistent pool backlogs accompany elevated Waiting Staleness. We therefore use pool occupancy as feedback for the rejection budget, removing surplus when the pool is overfilled and relaxing rejection when ready data becomes scarce.

With the rejection budget determined by pool occupancy, we next consider which trajectories to remove. Ranking by raw staleness may disproportionately reject harder problems, since longer rollouts are more likely to accumulate generation lag~\citep{fu2025areal}. This can bias training toward shorter, easier examples. We introduce the Prefix Forgetting Score, which measures changes in prefix token log-probabilities, to refine the staleness signal.

We combine these components in \aname{} (Pool-Aware Control of Effective Staleness), as illustrated in Figure~\ref{fig:framework}. Pool occupancy sets an adaptive rejection budget, while an effective-staleness score ranks trajectories under that budget. The score retains Waiting Staleness in full and weights Generation Staleness by relative prefix drift. All filtering takes place at the completed-trajectory pool, where the trainer continues drawing trajectories until it has formed a full training batch. We evaluate \aname{} against synchronous and asynchronous RL on six mathematical reasoning benchmarks. As shown in Figure~\ref{fig:placeholder}, \aname{} reaches its peak average validation accuracy earlier than both baselines, at 2h54m of logged training-step time.

Our contributions are:
\begin{enumerate}
[leftmargin=*,itemsep=3pt,topsep=0pt,parsep=0pt,partopsep=0pt]
    \item We characterize how staleness accumulates in fully asynchronous RL. Our Stale-\(k\) experiments show that larger admitted policy lag slows early optimization, and we decompose trajectory staleness into Generation and Waiting Staleness.

    \item We identify persistent pool backlogs as a source of avoidable Waiting Staleness and establish pool occupancy as a natural signal for adaptive trajectory rejection.

    \item We introduce \aname{}, which combines a pool-aware
    rejection budget with prefix-aware trajectory ranking. Across six mathematical reasoning benchmarks, it improves learning over unfiltered asynchronous RL while retaining the time advantage of asynchronous execution.
\end{enumerate}

\begin{figure}[t]
    \centering
    \includegraphics[width=0.99\linewidth]{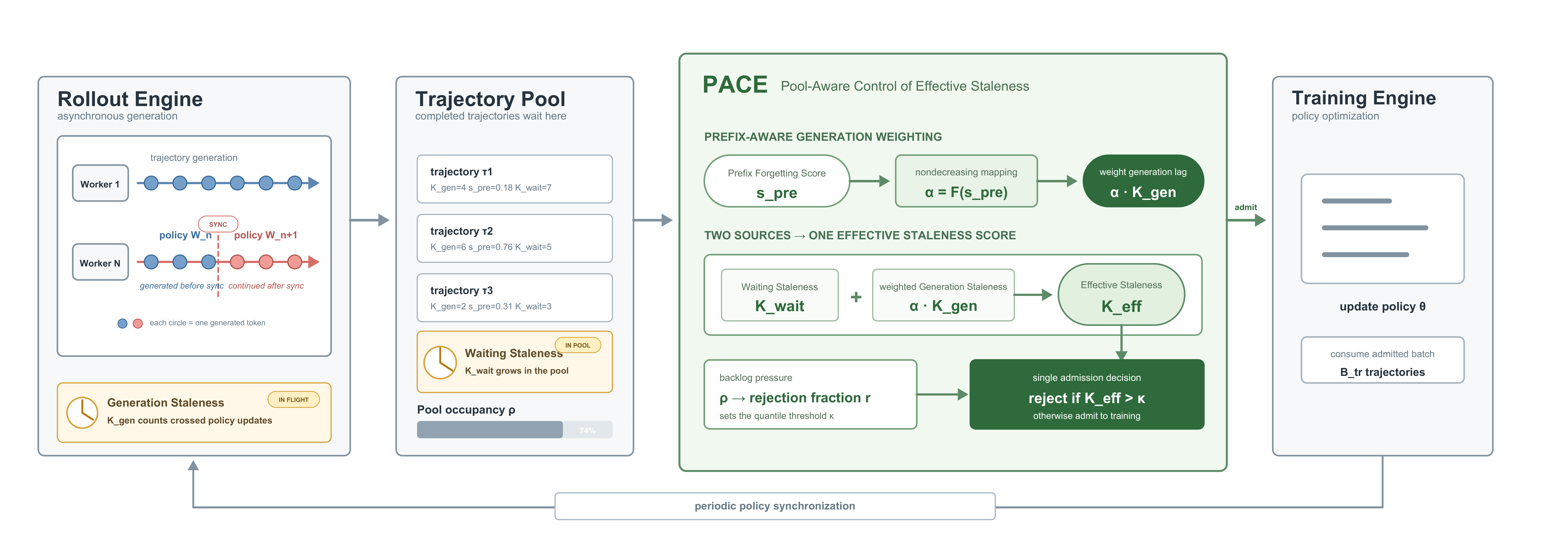}
    \caption{
    Overview of \aname{}.
    The {\color{gray}gray}-shaded modules depict the standard asynchronous RL pipeline, while the {\color{green}green}-shaded modules highlight our proposed components.
    \aname{} {uses an effective-staleness score} and pool occupancy to adaptively reject trajectories before training. }
    \label{fig:framework}
\end{figure}

\section{Where Does Staleness Accumulate?}
\label{sec:preliminaries}

As discussed above, asynchronous RL improves resource utilization but allows trajectories to become stale as training proceeds. Higher throughput therefore does not necessarily translate into faster policy improvement. In this section, we first define trajectory staleness and use controlled Stale-$k$ experiments to examine how increasing policy lag affects early optimization. We then decompose this lag into \emph{Generation Staleness}, accumulated before rollout completion, and \emph{Waiting Staleness}, accumulated in the pool afterward. This decomposition forms the basis of our method.

\textbf{Trajectory Lifecycle and Policy Lag.}
Let $\tau=(x,y)$ denote a complete trajectory, where $x\sim\mathcal{D}$ is a prompt and $y=(y_1,\ldots,y_T)$ is a
response of length $T=|y|$. An autoregressive language model
parameterized by $\theta$ defines a policy $\pi_\theta$, with
$\pi_\theta(y\mid x)=\prod_{t=1}^{T}\pi_\theta(y_t\mid x,y_{<t})$.
A reward model or verifier scores $(x,y)$, and the resulting
rewards are used to estimate token-level advantages $\hat A_t$.

In asynchronous RL, the trainer maintains a sequence of policy
versions $\{\theta_j\}_{j\geq 0}$, whose weights are periodically
synchronized to rollout workers. Let
$\mu_v:=\pi_{\theta_v}$ denote the behavior policy corresponding
to trainer version $v$. Generation proceeds concurrently with
training and may span multiple policy versions \citep{team2025kimi,fu2025areal,zhou2025april}. For each token $y_t$, the rollout worker records the policy version index $v_t$ used for generation and the corresponding log-probability $\ell_t^\mu=\log\mu_{v_t}(y_t\mid x,y_{<t})$. Let $c(\tau)$ denote the trainer version when trajectory $\tau$ completes generation. The completed trajectory is subsequently buffered until it is consumed at training step $j$. This step uses the current trainer policy $\pi_{\theta_j}$ and produces $\theta_{j+1}$. When trajectory $\tau$ is used for training at step $j$, its token-level policy lag is $k_t(\tau;j)=j-v_t$. Unless otherwise stated, we define the policy lag of a trajectory as the mean over its response tokens:
\begin{equation}
    \bar k(\tau;j)
    =
    \frac{1}{T}\sum_{t=1}^{T}(j-v_t).
    \label{eq:token-weighted-lag}
\end{equation}
This definition accommodates trajectories generated across
multiple policy versions and reduces to $j-v$ when all response
tokens are generated under a single version $v$.

\textbf{Controlled Stale-$k$ Analysis.}
To isolate the effect of policy staleness in asynchronous RL, we follow the controlled Stale-$k$ setting used in prior work~\citep{zheng2025prosperity,fu2025areal}, where only trajectories with a policy lag of at most $k$ are admitted for training. Synchronous RL serves as the on-policy baseline. For the asynchronous runs, we vary $k$ while keeping the remaining training configuration unchanged. As shown in Figure~\ref{fig:staleness-ablation}, larger values of $k$ slow early optimization, with the most pronounced delay at $k=8$. The AIME 2024 validation curves show a similar trend. These results suggest that limiting policy lag can accelerate policy improvement.

\begin{figure}[htbp]
    \centering
    \begin{minipage}[t]{0.45\linewidth}
        \centering
        \includegraphics[width=\linewidth]{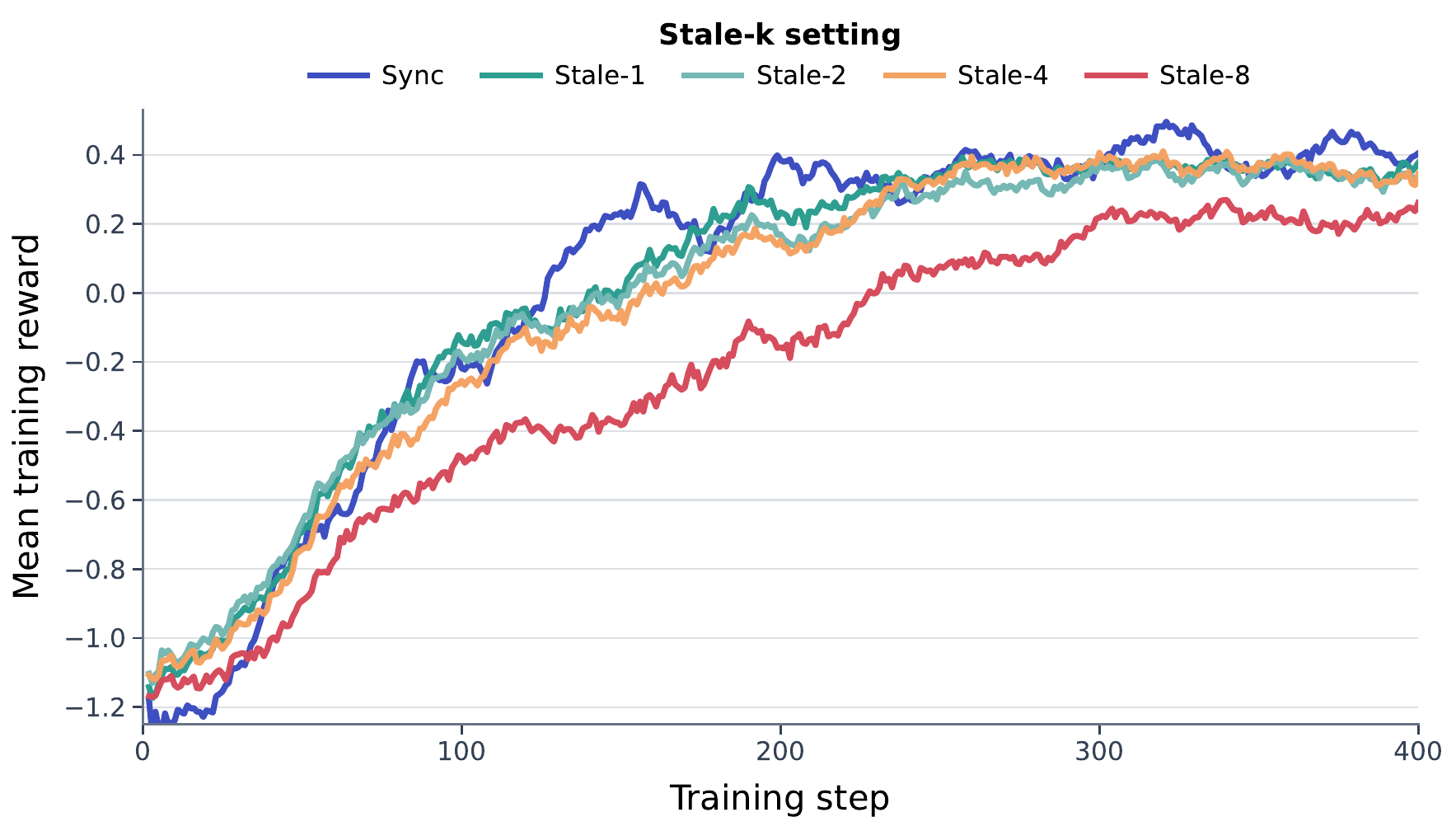}
        \small (a) DAPO-MATH-17K training reward.
    \end{minipage}\hfill
    \begin{minipage}[t]{0.45\linewidth}
        \centering
        \includegraphics[width=\linewidth]{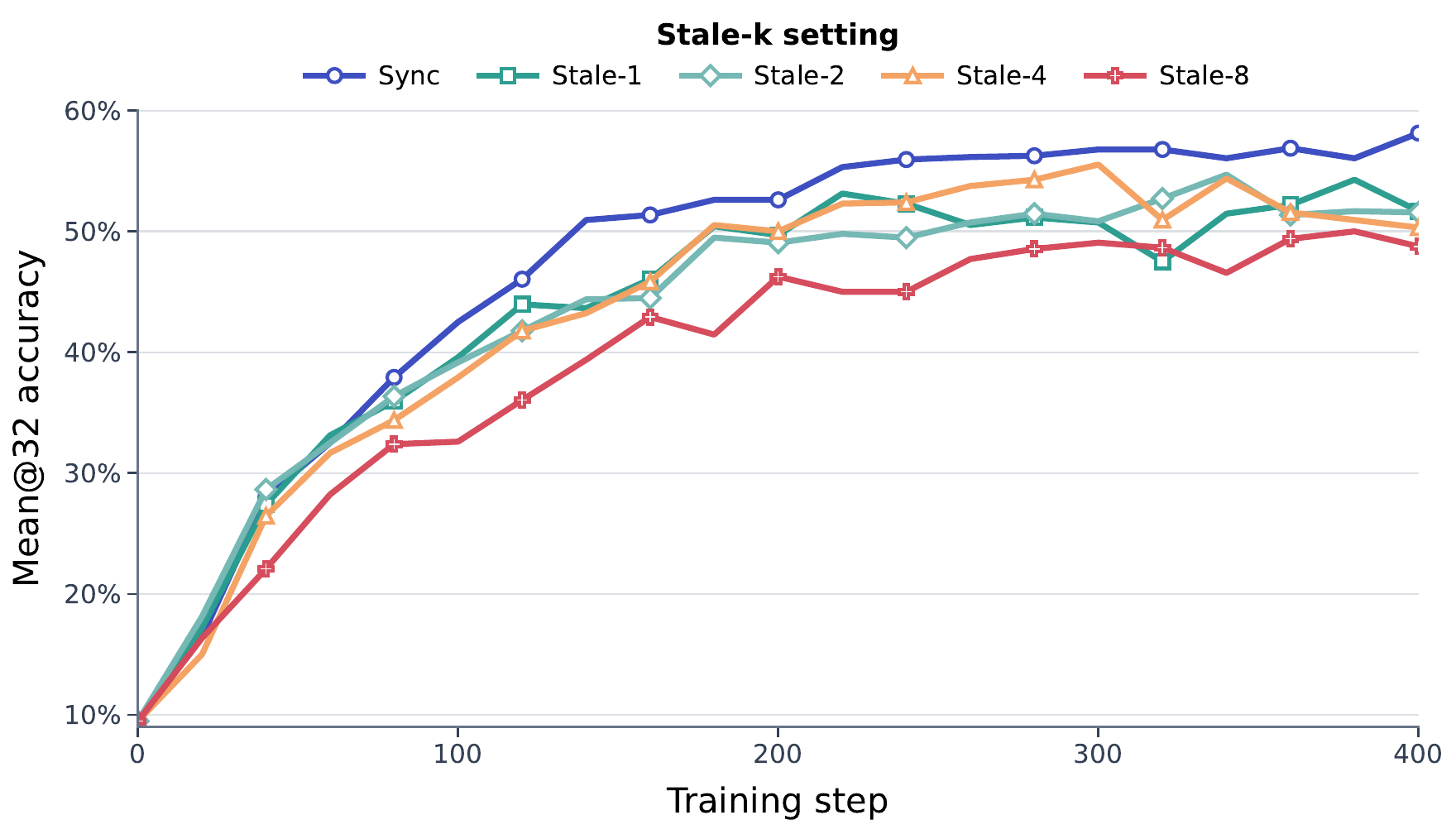}
        \small (b) AIME 2024 mean@32 accuracy.
    \end{minipage}
    \caption{
    Effect of the maximum admitted staleness $k$ on Qwen3-8B during the first
    400 training steps. Sync is the on-policy baseline. Stale-$k$ admits
    trajectories with policy lag at most $k$; for example, Stale-1 admits
    trajectories at most one policy update behind the trainer.
    }
    \label{fig:staleness-ablation}

\end{figure}

\textbf{Staleness Decomposition.}
{To understand where this lag arises, we examine two stages of a trajectory's lifetime. During generation, a rollout may span several trainer updates or policy-weight synchronization boundaries, producing \emph{Generation Staleness}. After completion, it may continue to age while waiting for training, producing \emph{Waiting Staleness}. Let $c(\tau)$ denote the trainer version when trajectory $\tau$ completes generation, and let $j$ denote the version at which it is consumed. Waiting includes any intermediate buffering between completion and consumption. The following lemma decomposes the policy lag defined in Equation~\ref{eq:token-weighted-lag} without presuming that either component dominates.}

\begin{lemma}[Staleness decomposition]
\label{lem:staleness-decomposition}
Consider a completed trajectory $\tau=(x,y)$ with $T$ response tokens, where token $t$ was sampled under behavior-policy version $v_t$. Suppose that $v_t \leq c(\tau) \leq j$ for all $t$. Then its staleness at training step $j$ decomposes as
\begin{equation}
    \bar k(\tau;j)
    =
    \underbrace{
        \frac{1}{T}\sum_{t=1}^{T}
        \left(c(\tau)-v_t\right)
    }_{\substack{
        K_{\mathrm{gen}}(\tau)
    }}
    +
    \underbrace{
        j-c(\tau)
    }_{\substack{
        K_{\mathrm{wait}}(\tau;j)
    }}.
    \label{eq:staleness-decomposition}
\end{equation}
\end{lemma}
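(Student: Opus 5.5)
The identity is purely algebraic, so the plan is to insert the completion version $c(\tau)$ inside each summand of the token-weighted lag in Equation~\ref{eq:token-weighted-lag} and then separate the sum. For each token $t$ I would write
\begin{equation*}
j - v_t = \bigl(c(\tau) - v_t\bigr) + \bigl(j - c(\tau)\bigr),
\end{equation*}
which holds trivially because $c(\tau)$ is added and subtracted.

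Next I would average over $t=1,\dots,T$ and use linearity of the finite sum. The first part averages to $\frac{1}{T}\sum_{t}(c(\tau)-v_t)$, which is $K_{\mathrm{gen}}(\tau)$ by definition. The second part, $j-c(\tau)$, does not depend on $t$, so its average is $\frac{1}{T}\cdot T\,(j-c(\tau)) = j-c(\tau) = K_{\mathrm{wait}}(\tau;j)$. Here $T\ge 1$ is needed, which holds for any completed trajectory with a nonempty response.

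The hypothesis $v_t \le c(\tau) \le j$ plays no role in the equality itself. It only guarantees that every summand of $K_{\mathrm{gen}}$ and the term $K_{\mathrm{wait}}$ are nonnegative, so both components can be read as accumulated lag, i.e.\ as genuine staleness contributions. I would state this as a remark at the end of the proof. I would also check the sanity case where all tokens share a single version $v=c(\tau)$: then $K_{\mathrm{gen}}=0$ and the formula reduces to $j-v$, consistent with the remark after Equation~\ref{eq:token-weighted-lag}.

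There is no real obstacle. The only care needed is the bookkeeping that the constant $j-c(\tau)$ survives the $1/T$ averaging unchanged, and making clear which properties depend on the ordering hypothesis (nonnegativity) and which do not (the equality).
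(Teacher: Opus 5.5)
Your proposal is correct and matches the paper's proof essentially line for line: both add and subtract $c(\tau)$ inside each token-level lag $j-v_t$, split the average by linearity, and observe that the constant $j-c(\tau)$ is unchanged by the $1/T$ averaging. Like you, the paper uses the hypothesis $v_t\le c(\tau)\le j$ only to conclude that both components are nonnegative, not to establish the equality itself.
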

The proof is provided in Appendix~\ref{app:proof-staleness-decomposition}.

\textit{Generation Staleness.}
$K_{\mathrm{gen}}(\tau)$ measures the policy-version lag accumulated before rollout completion. It can become large for long or interrupted trajectories simply because their generation spans multiple trainer updates. Consequently, a large $K_{\mathrm{gen}}$ does not necessarily imply that the current rollout policy assigns a substantially different likelihood to the realized prefix.

\textit{Waiting Staleness.}
$K_{\mathrm{wait}}(\tau;j)$ measures the additional lag accumulated after the trajectory has entered the completed-trajectory pool. Every token in the trajectory receives the same waiting component, and this component grows directly with pool residence time.

Taken together, the Stale-$k$ results and Lemma~\ref{lem:staleness-decomposition} show that controlling aggregate policy lag requires addressing two distinct sources.
Waiting Staleness arises after rollout completion and grows with pool residence time. The completed-trajectory backlog therefore provides a natural signal for controlling rejection pressure.
Generation Staleness accumulates while a rollout is in flight and depends on response length and synchronization timing. Version age alone is therefore an incomplete signal for ranking trajectories.
In the following section, we describe how we use these observations to design an efficient asynchronous RL algorithm.

\section{PACE}
\label{sec:method}

In fully asynchronous RL, rollout production and trainer consumption can proceed at different rates. When rollout supply is insufficient, the trainer must wait for new data. When production persistently exceeds consumption, surplus trajectories accumulate in the pool. This backlog adds Waiting Staleness without increasing the throughput of an already saturated trainer. We first examine how this backlog develops and then present \aname{} (Pool-Aware Control of Effective Staleness) to control it. Building on the decomposition in Section~\ref{sec:preliminaries}, we use pool occupancy to determine \emph{how much} data to reject and an effective-staleness score to determine \emph{which} completed trajectories to reject.

\subsection{Pool-Aware Control}

In this part, we describe how PACE determines the rejection budget and selects trajectories for training. We begin with the example in Figure~\ref{fig:staleness-decomposition} to motivate this design. We train Qwen3-8B on mathematical reasoning tasks using the same setup as the Stale-$8$ experiment and track Generation Staleness and Waiting Staleness over time.

As shown in Figure~\ref{fig:staleness-decomposition} (left), when rollout production outpaces trainer consumption, completed trajectories accumulate without improving the throughput of an already saturated trainer. Each trainer update adds one unit of Waiting Staleness to every trajectory remaining in the pool, so persistent backlog increases Waiting Staleness while Generation Staleness remains relatively stable. Under stable operating conditions, Little's law relates average pool occupancy to average residence time at a given departure rate~\citep{little2008little}. This motivates using occupancy above a target pool size to adapt the rejection budget online, removing surplus trajectories while retaining a buffer of ready data for training. Figure~\ref{fig:staleness-decomposition} (right) illustrates how controlling this backlog reduces Waiting Staleness.

\begin{figure}[!htbp]
\centering
\includegraphics[
width=\linewidth,
trim={0bp 0.4cm 0bp 0bp},
clip
]{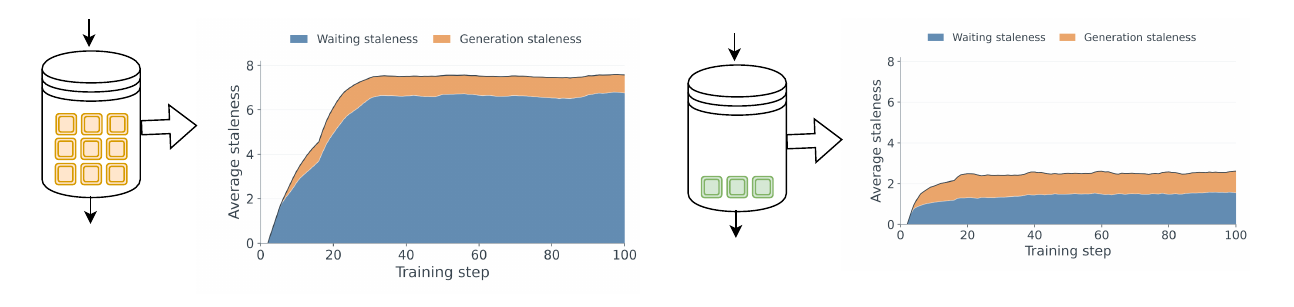}
\caption{Pool occupancy and staleness. A persistently backlogged pool (left) accumulates Waiting Staleness as completed trajectories wait for training. Removing surplus trajectories keeps the pool smaller and continually replenished (right), reducing this additional waiting.}
\label{fig:staleness-decomposition}

\end{figure}

To determine the rejection rate, we measure the fraction of trajectories above the target pool size. Let $N_j$ denote the number of completed trajectories in the pool at training step $j$, and let $N^{\mathrm{tar}}$ denote the desired buffer size of ready trajectories for training. When $N_j>N^{\mathrm{tar}}$, we treat the $N_j-N^{\mathrm{tar}}$ trajectories above this target as surplus. The instantaneous target rejection rate is then given by
\begin{equation}
    r_j
    =
    \operatorname{clip}
    \left(
        \frac{N_j-N^{\mathrm{tar}}}
             {\max(N_j,1)},
        0,
        1
    \right).
    \label{eq:occupancy-rejection-rate}
\end{equation}
Thus, $r_j=0$ when the pool is at or below its target occupancy, and a larger relative surplus leads to a higher target rejection rate. To avoid abrupt changes in the rejection budget as pool occupancy fluctuates, we smooth $r_j$ using an exponential moving average,
\begin{equation}
    \widehat r_j
    =
    \beta \widehat r_{j-1}
    +
    (1-\beta)r_j,
    \qquad
    \beta\in[0,1).
    \label{eq:smoothed-rejection-rate}
\end{equation}

The smoothed budget $\widehat r_j$ determines how much data to reject. We next specify which completed trajectories to discard using a score-based filtering rule.

Let $M(\tau;j)$ be a scalar score assigned to a completed trajectory $\tau$ at training step $j$,  with larger values indicating higher rejection priority. We will define its construction in Section~\ref{sec:design_metric}. To implement the rejection budget, we target the upper $\widehat r_j$ fraction of the score distribution. Using a sliding window $\mathcal{W}_j^M$ of recently observed scores, PACE estimates the corresponding cutoff as  $
    \kappa_j^M
    =
    \operatorname{Quantile}_{1-\widehat r_j}
    \left(\mathcal{W}_j^M\right).
    \label{eq:metric-quantile-threshold}
$
A completed trajectory is rejected when $M(\tau;j)>\kappa_j^M$, and the trainer continues drawing trajectories until a full training batch is formed. This rule preferentially discards trajectories with higher scores while targeting the rejection fraction set by the pool-aware budget. 

\subsection{Staleness-based Trajectory Scoring}
\label{sec:design_metric}

In this subsection, we design the trajectory score $M(\tau;j)$ used by the admission rule. We first introduce the Prefix Forgetting Score to measure policy drift on generated prefixes, then define a weighted staleness metric that accommodates different treatments of Waiting and Generation Staleness.

\textbf{Prefix Forgetting Score.}
We address Waiting Staleness by controlling pool backlog. For Generation Staleness, we instead examine how much the policy has changed on the generated prefix. Generation Staleness records the policy-version lag accumulated before rollout completion, but long or interrupted rollouts may span several trainer updates while the updated rollout policy still assigns similar likelihoods to their prefixes. Generation age alone therefore provides an incomplete basis for deciding which trajectories to reject. We use prefix rescoring to measure the change in likelihood on the realized prefix and calibrate the contribution of Generation Staleness.

Let $\xi=(x,y_{1:m})$ denote a realized prefix generated before a policy synchronization, and let $u$ denote the rollout-policy version under which the prefix is rescored. Using the behavior log-probabilities recorded during generation, we define the \emph{Prefix Forgetting Score} as the mean absolute log-ratio between the updated rollout policy and the behavior policies that generated the prefix:
\begin{equation}
    s_{\mathrm{pre}}(\xi;u)
    =
    \frac{1}{m}
    \sum_{t=1}^{m}
    \left|
        \log \pi_{\theta_u}(y_t\mid x,y_{<t})
        -
        \ell_t^{\mu}
    \right|.
    \label{eq:prefix-rescoring-score}
\end{equation}
Smaller scores indicate closer agreement with the recorded behavior likelihoods on the realized prefix. This provides a trajectory-specific signal of observed policy drift that complements generation age. We write $s_{\mathrm{pre}}(\tau)$ for the prefix score associated with a completed trajectory.

\textbf{Trajectory Score.} Using the decomposition in Section~\ref{sec:preliminaries}, we define the trajectory staleness score:
\begin{equation}
    M(\tau;j)
    =
    w_1 K_{\mathrm{wait}}(\tau;j)
    +
    w_2(\tau;j) K_{\mathrm{gen}}(\tau),
    \label{eq:weighted-staleness-metric}
\end{equation}
where $w_1\geq 0$ controls the waiting penalty and $w_2(\tau;j)\geq 0$ controls the generation penalty. The generation weight can depend on the trajectory's prefix score and recent score statistics. Different choices of these weights yield the metrics below, all of which use the same pool-aware rejection budget and quantile-based filtering rule.

\textbf{Effective Staleness.}
We use the Prefix Forgetting Score to adjust the generation penalty while retaining the full waiting penalty. Prefix rescoring during generation does not capture subsequent policy changes while the completed trajectory waits in the pool, so we retain Waiting Staleness as a conservative age penalty. Setting $w_1=1$ and $w_2(\tau;j)=F_j(s_{\mathrm{pre}}(\tau))$ gives
\begin{equation}
    K_{\mathrm{eff}}(\tau;j)
    =
    K_{\mathrm{wait}}(\tau;j)
    +
    F_j\!\left(s_{\mathrm{pre}}(\tau)\right)
    K_{\mathrm{gen}}(\tau),
    \label{eq:effective-staleness-general}
\end{equation}
where $F_j:\mathbb{R}_{\geq 0}\rightarrow[0,1]$ is nondecreasing. This weighting discounts generation age more strongly when the observed prefix drift is small.
To construct $F_j$, we measure each prefix score relative to recent observations. Using a sliding window $\mathcal{W}_j^{\mathrm{pre}}$ of recent prefix scores, we compute its empirical percentile rank as
\begin{equation}
    q_j(\tau)
    =
    \widehat{\operatorname{CDF}}_{\mathcal{W}_j^{\mathrm{pre}}}
    \!\left(s_{\mathrm{pre}}(\tau)\right)
    \in[0,1].
    \label{eq:prefix-score-quantile}
\end{equation}
A larger $q_j(\tau)$ indicates greater observed prefix drift relative to other scores in the window.
We then apply a smooth, monotone temperature-sharpening map~\citep{berthelot2019mixmatch}:
\begin{equation}
    F_j\!\left(s_{\mathrm{pre}}(\tau)\right)
    =
    \phi_\gamma\!\left(q_j(\tau)\right)
    :=
    \frac{q_j(\tau)^\gamma}
    {q_j(\tau)^\gamma+\left(1-q_j(\tau)\right)^\gamma},
    \qquad \gamma\geq 1.
    \label{eq:quantile-sharpening}
\end{equation}
For $\gamma>1$, this mapping pushes weights toward zero below the median and toward one above it, while preserving their ordering. Setting $\gamma=1$ recovers the linear map $\phi_1(q)=q$. The resulting weights reflect prefix drift relative to recent observations. We report a sensitivity analysis for $\gamma$ in Appendix~\ref{app:gamma_sensitive}.

\textbf{Other Variants.}
Setting $w_1=1$ and $w_2(\tau;j)=1$ yields \aname{} (Raw Staleness), which weights Waiting Staleness and Generation Staleness equally. Setting $w_1=0$ and $w_2(\tau;j)=F_j(s_{\mathrm{pre}}(\tau))$ yields the prefix-weighted generation-only variant \aname{} (Generation Staleness), which removes the Waiting Staleness term to examine its role in trajectory ranking. Our algorithm is summarized in Algorithm~\ref{alg:pace}. We then use two remarks to close this section.

\textbf{Remark 1.}
The prefix-aware weighting reduces the Generation Staleness penalty for long or interrupted rollouts when their observed prefix drift is small. This design is motivated by the potential bias of raw-staleness filtering toward easier examples, which we examine in Appendix~\ref{app:staleness-selection-bias}.

\textbf{Remark 2.}
\aname{} operates at trajectory admission and does not depend on a particular policy-optimization objective. It selects training data using pool occupancy and trajectory metadata, leaving the reward computation, advantage estimation, and policy loss unchanged. This separation allows integration with different single-turn and multi-turn asynchronous RL trainers, provided that they expose the required trajectory metadata and support prefix rescoring.

\section{Experiments}
\label{sec:experiments}

In this section, {we conduct a comprehensive evaluation} of our proposed \aname{} algorithm (Algorithm~\ref{alg:pace}) and compare its performance against the state-of-the-art baseline, M2PO {\citep{zheng2025prosperity}}. {Our experimental results demonstrate two key advantages} {of the proposed approach:} (1) pool-aware rejection curbs the accumulation of Waiting
Staleness by controlling the rollout backlog while maintaining training throughput; and (2) effective-staleness ranking distinguishes generation age from observed prefix-policy drift, enabling more selective trajectory filtering and supporting more stable training
and stronger validation performance.

\textbf{Experimental Setup.}
Following the experimental setup of DAPO~\citep{yu2026dapo}, we conduct experiments using Qwen3-8B and Qwen3-4B~\citep{yang2025qwen3} as base policies. We compare PACE against vanilla synchronous DAPO, asynchronous DAPO, and M2PO with DAPO as its base trainer. We train on DAPO-MATH-17K and evaluate all methods on six mathematical reasoning benchmarks: AIME 2023, AIME 2024, AIME 2025, BeyondAIME~\citep{seed2025seed1}, BrUMO 2025, and HMMT 2025~\citep{balunovic2026matharena}. {Table~\ref{tab:best_step300_results} reports mean@32 for all six benchmarks.} We also evaluate \aname{} in a multi-turn mathematical
reasoning setting based on ReTool. In this setting, the model interleaves natural language reasoning with code execution and incorporates execution feedback into subsequent reasoning. Full implementation details and hyperparameter settings are provided in the Appendix.

\textbf{Training Stability and Performance.}
Figure~\ref{fig:representative_benchmark_dynamics} compares the validation trajectories on AIME 2024 and BeyondAIME over the first 300 training steps.
All methods improve rapidly during early training, but their performance diverges as optimization progresses. On both benchmarks, \aname{} sustains improvements and
consistently achieves higher accuracy than unfiltered Async after the initial evaluation. Its performance approaches that of Sync on BeyondAIME and exceeds Sync on AIME 2024 during later training. These gains persist across multiple evaluations, showing that the advantage of \aname{} extends beyond a single best-performing checkpoint. The complete validation trajectories for the single-turn and multi-turn settings are provided in Appendix \ref{app:additional-results}.

Table~\ref{tab:best_step300_results} further compares the results at the evaluation with the highest six-benchmark average within 300 updates. For single-turn mathematical reasoning with Qwen3-8B, \aname{} achieves an average accuracy of 41.29\%, improving over Async by 4.95 percentage points and
matching the performance of Sync at 41.07\%. The improvement is larger in the multi-turn tool-integrated reasoning setting: \aname{} achieves 47.91\% average
accuracy, compared with 29.24\% for Async, 39.07\% for M2PO, and 45.06\% for Sync.
Together, these results demonstrate sustained gains over unfiltered asynchronous training and strong aggregate performance across both settings.

\begin{table}[!htbp]
    \centering
    \caption{
        Mean@32 validation accuracy (\%) at the evaluation with the highest six-benchmark average within 300 training steps. Each row reports all six benchmarks at the same evaluation; Avg.\ is their unweighted mean. Green values indicate percentage-point improvements over Async. Bold values indicate the best result among asynchronous methods within each setting. 
    }
    \label{tab:best_step300_results}

    \small
    \setlength{\tabcolsep}{4pt}
    \renewcommand{\arraystretch}{1.08}

    \resizebox{\linewidth}{!}{\begin{tabular}{lccccccc}
        \toprule
        Method
        & AIME23
        & AIME24
        & AIME25
        & BeyondAIME
        & BrUMO25
        & HMMT25
        & Avg. \\
        \midrule

        \rowcolor{paceheader}
        \multicolumn{8}{c}{
            \textit{Single-Turn Mathematical Reasoning
            (Base: Qwen3-8B)}
        } \\
        \midrule

        Sync
        & 45.00 & 55.83 & 41.46 & 27.34
        & 52.60 & 24.17 & 41.07 \\

        Async
        & 39.53 & 51.17 & 40.57 & 24.16
        & 40.94 & 21.65 & 36.34 \\

        M2PO
        & \textbf{48.13} & 56.88 &  40.42 & 27.31
        & 41.88 &  18.33 & 38.82 \\

        \aname{} (Generation Staleness)
        & 42.40 & 50.21 & 41.88 & 25.53
        & 38.49 & 18.41 & 36.15 \\

        \aname{} (Raw Staleness)
        & 42.50 & 53.91 & 40.42 & 24.49
        & 50.54 & \textbf{23.35} & 39.20 \\

        \rowcolor{pacerow}
        \aname{} 
        & 43.54\inc{4.01}
        & \textbf{57.66}\inc{6.49}
        & \textbf{43.44}\inc{2.87}
        & \textbf{27.81}\inc{3.65}
        & \textbf{52.40}\inc{11.46}
        & 22.92\inc{1.27}
        & \textbf{41.29}\inc{4.95} \\

        \midrule
        \rowcolor{paceheader}
        \multicolumn{8}{c}{
            \textit{Multi-Turn Tool-Integrated Reasoning
            (Base: Qwen3-4B-Instruct)}
        } \\
        \midrule

        Sync
        & 52.08 & 59.69 & 50.83 & 26.50
        & 48.13 & 33.13 & 45.06 \\

        Async
        & 39.79 & 37.50 & 30.83 & 16.38
        & 31.67 & 19.27 & 29.24 \\

        M2PO
        & 49.17 & 57.29 & 41.56 & 22.84
        & 35.94 & 27.60 & 39.07 \\

        {\aname{} (Generation Staleness)}
        & {53.33} & {56.35} & {47.40} & {27.19}
        & {49.17} & {31.67} & {44.18} \\

        \aname{} (Raw Staleness)
        & \textbf{58.33} & 57.81 & 48.44 & 27.19
        & 50.94 & 32.19 & 45.82 \\

        \rowcolor{pacerow}
        \aname{}
        & 57.81\inc{18.02}
        & \textbf{63.02}\inc{25.52}
        & \textbf{52.40}\inc{21.56}
        & \textbf{28.06}\inc{11.69}
        & \textbf{51.77}\inc{20.10}
        & \textbf{34.38}\inc{15.10}
        & \textbf{47.91}\inc{18.67} \\

        \bottomrule
    \end{tabular}}
\end{table}

\FloatBarrier

\begin{figure*}[!htbp]
    \centering
    \includegraphics[width=0.87\textwidth]
    {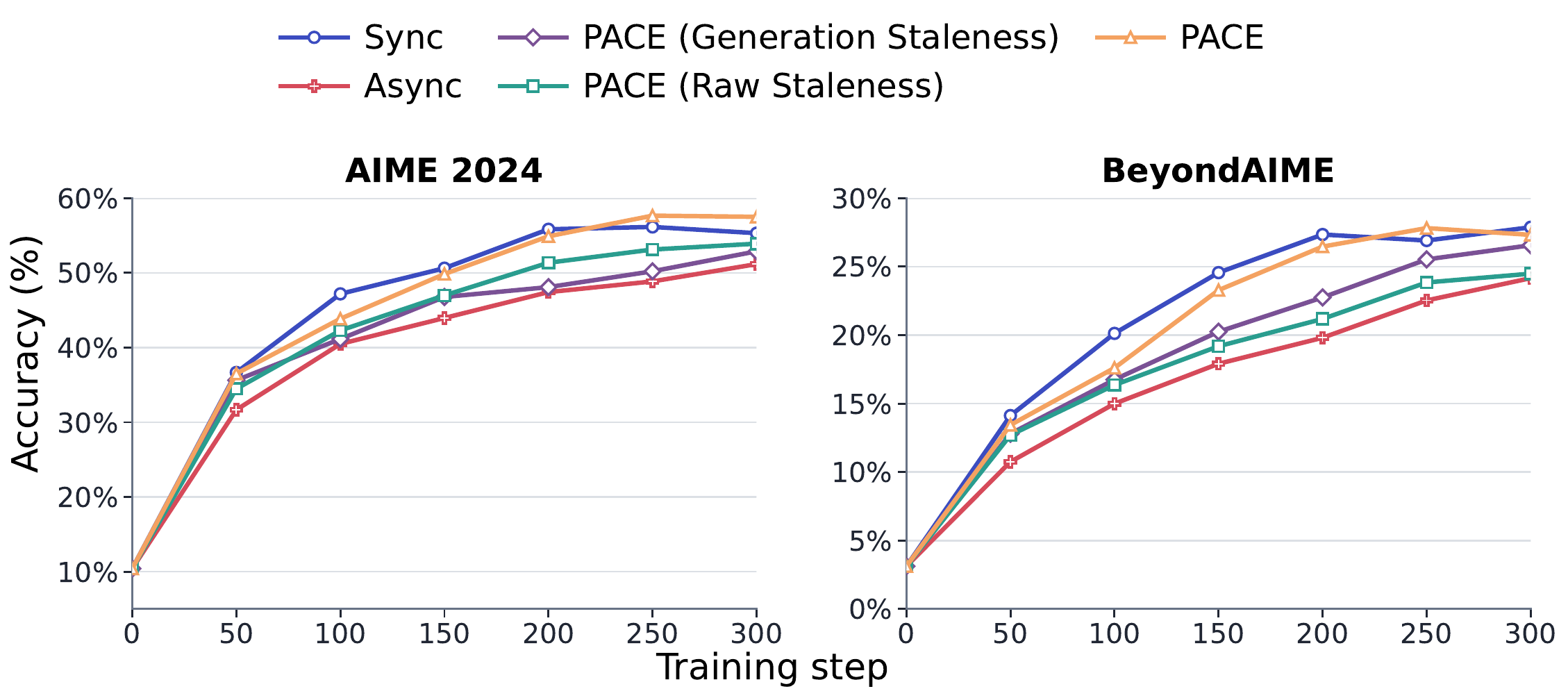}
    \caption{
    Representative mean@32 validation trajectories over the first
    300 training steps. Results for all six benchmarks are provided
    in Appendix Figure~\ref{fig:six_benchmark_dynamics}.}
    \label{fig:representative_benchmark_dynamics}

\end{figure*}

\textbf{Staleness Distribution Diagnostics.} We collect statistics over 50 training updates on the single-turn mathematical reasoning task. We record which candidate groups \aname{} retains or rejects in {Figure}~\ref{fig:pace_admission_distributions} and
compare the groups used for training under \aname{} and unfiltered Async in {Figure}~\ref{fig:async_pace_metric_distributions}.

Figure~\ref{fig:pace_admission_distributions} shows that groups with larger Waiting, Generation, and Effective Staleness tend to be rejected more often. As candidates are considered for training, \aname{} filters out high-staleness groups according to the pool-aware rejection budget. Removing surplus data helps keep the pool near its target size, allowing completed trajectories to enter training sooner and lowering the
Waiting Staleness of the data consumed by the trainer. Figure~\ref{fig:async_pace_metric_distributions} compares the groups entering training. With \aname{}, the Waiting and Effective Staleness distributions shift toward lower values, with fewer groups concentrated at large lags. This matches the goal of controlling pool accumulation so that completed trajectories spend less time waiting for training. Generation Staleness shows no comparable decrease. It accumulates during rollout generation and depends on rollout duration and policy-version progression. A larger generation lag does not necessarily indicate
greater prefix-policy mismatch. \aname{} uses the prefix score to adjust this penalty, allowing trajectories with large generation lag to remain eligible when their
observed prefix drift is low relative to recent scores.

\begin{figure*}[!t]
    \centering
    \begin{subfigure}[t]{0.24\linewidth}
        \centering
        \includegraphics[width=\linewidth]{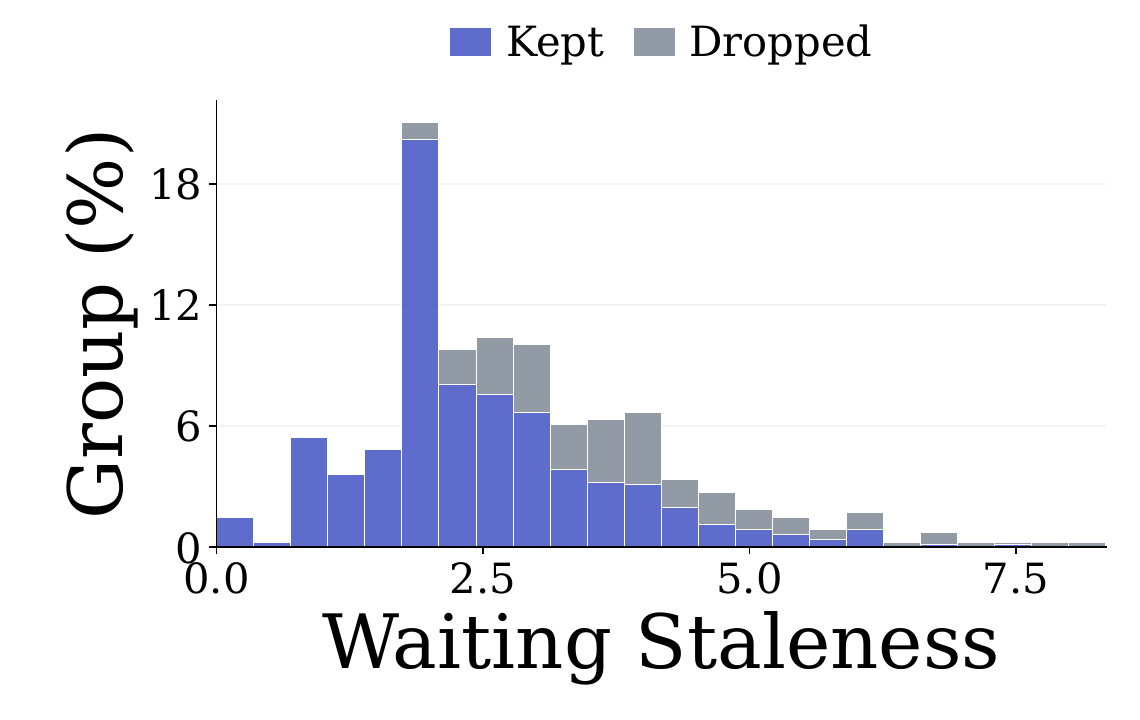}
        \caption{Waiting staleness.}
        \label{fig:pace_admission_waiting}
    \end{subfigure}\hfill
    \begin{subfigure}[t]{0.24\linewidth}
        \centering
        \includegraphics[width=\linewidth]{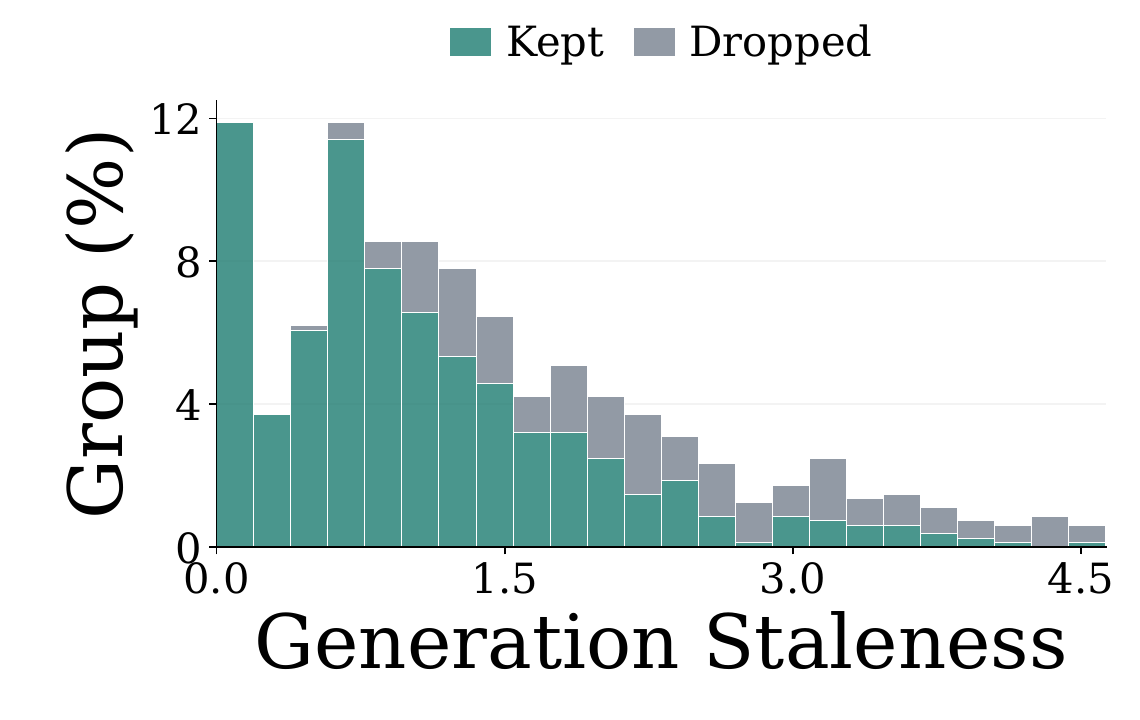}
        \captionsetup{justification=centering}\caption{Generation \mbox{Staleness.}}
        \label{fig:pace_admission_generation}
    \end{subfigure}\hfill
    \begin{subfigure}[t]{0.24\linewidth}
        \centering
        \includegraphics[width=\linewidth]{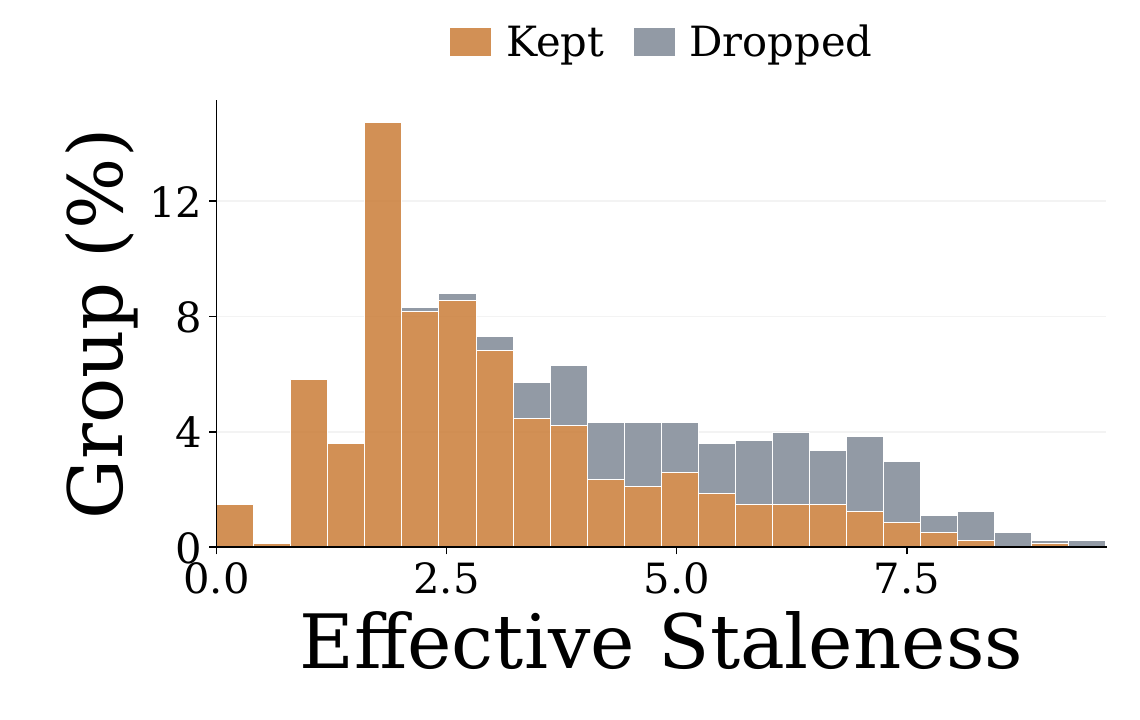}
        \caption{Effective staleness.}
        \label{fig:pace_admission_effective}
    \end{subfigure}\hfill
    \begin{subfigure}[t]{0.24\linewidth}
        \centering
        \includegraphics[width=\linewidth]{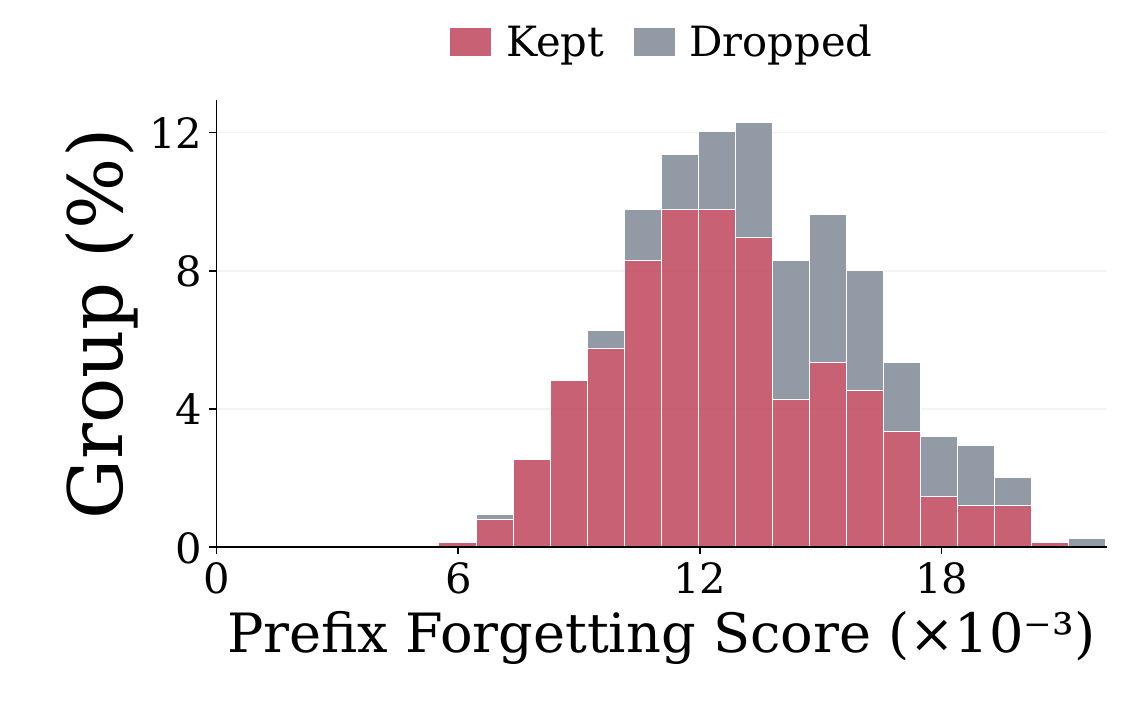}
        \caption{Prefix score.}
        \label{fig:pace_admission_prefix}
    \end{subfigure}
    \caption{\aname{} admission on math rollouts. Histograms show group-level metrics for all candidates encountered during the 50-update diagnostic: metric-specific colors denote retained groups, and gray denotes rejected groups. }
    \label{fig:pace_admission_distributions}

\end{figure*}

\begin{figure*}[!t]
    \centering
    \begin{subfigure}[t]{0.24\linewidth}
        \centering
        \includegraphics[width=\linewidth]{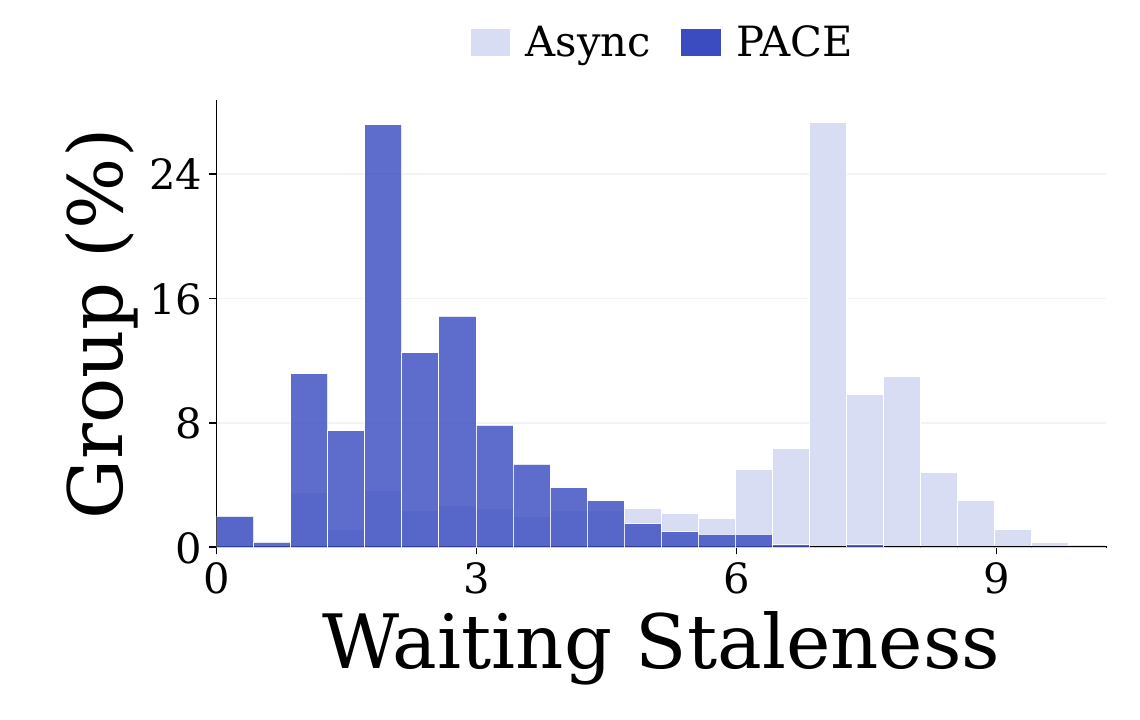}
        \caption{Waiting staleness.}
        \label{fig:async_pace_waiting}
    \end{subfigure}\hfill
    \begin{subfigure}[t]{0.24\linewidth}
        \centering
        \includegraphics[width=\linewidth]{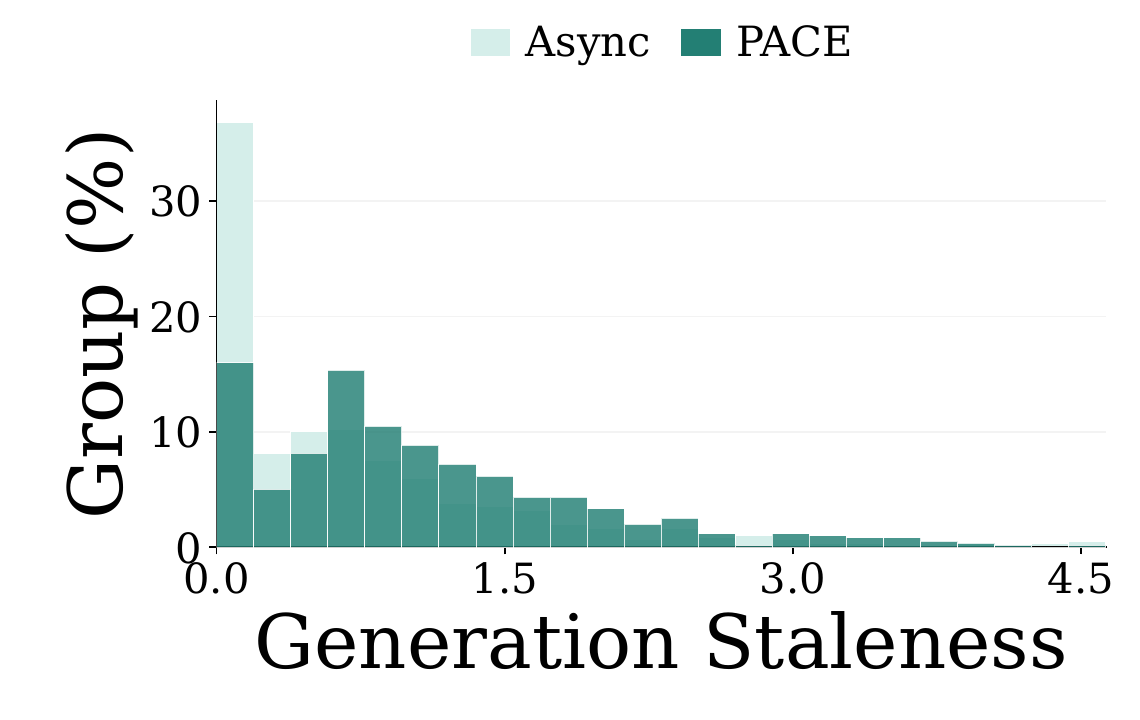}
        \captionsetup{justification=centering}\caption{Generation \mbox{Staleness.}}
        \label{fig:async_pace_generation}
    \end{subfigure}\hfill
    \begin{subfigure}[t]{0.24\linewidth}
        \centering
        \includegraphics[width=\linewidth]{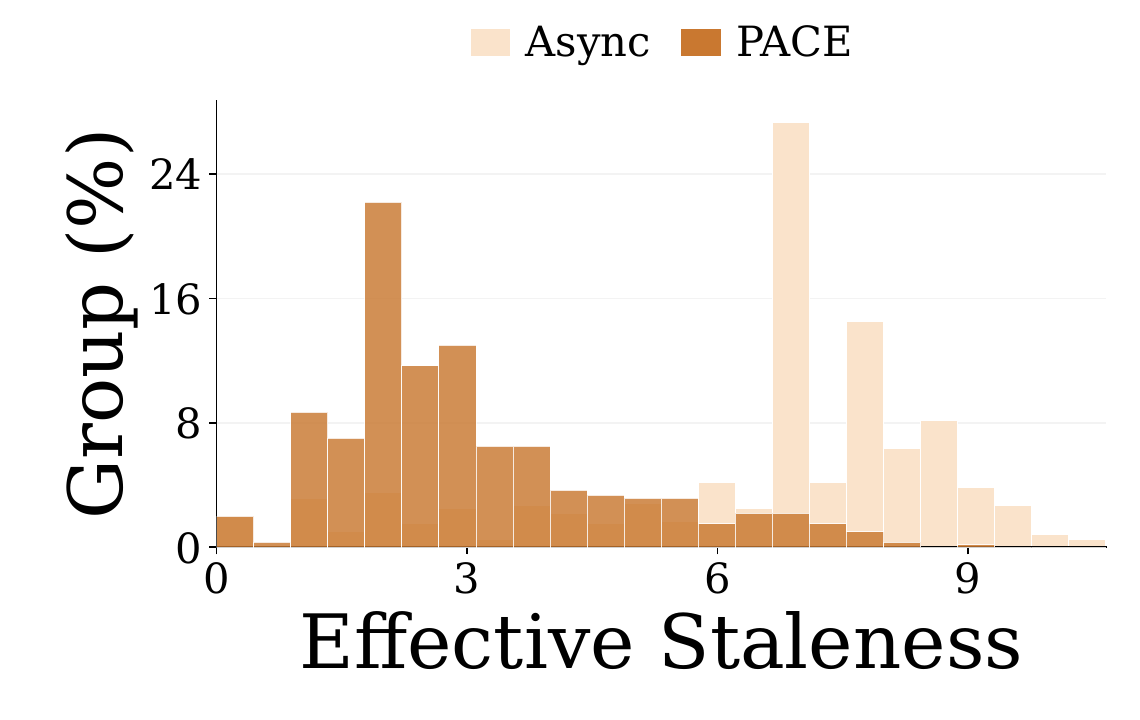}
        \caption{Effective staleness.}
        \label{fig:async_pace_effective}
    \end{subfigure}\hfill
    \begin{subfigure}[t]{0.24\linewidth}
        \centering
        \includegraphics[width=\linewidth]{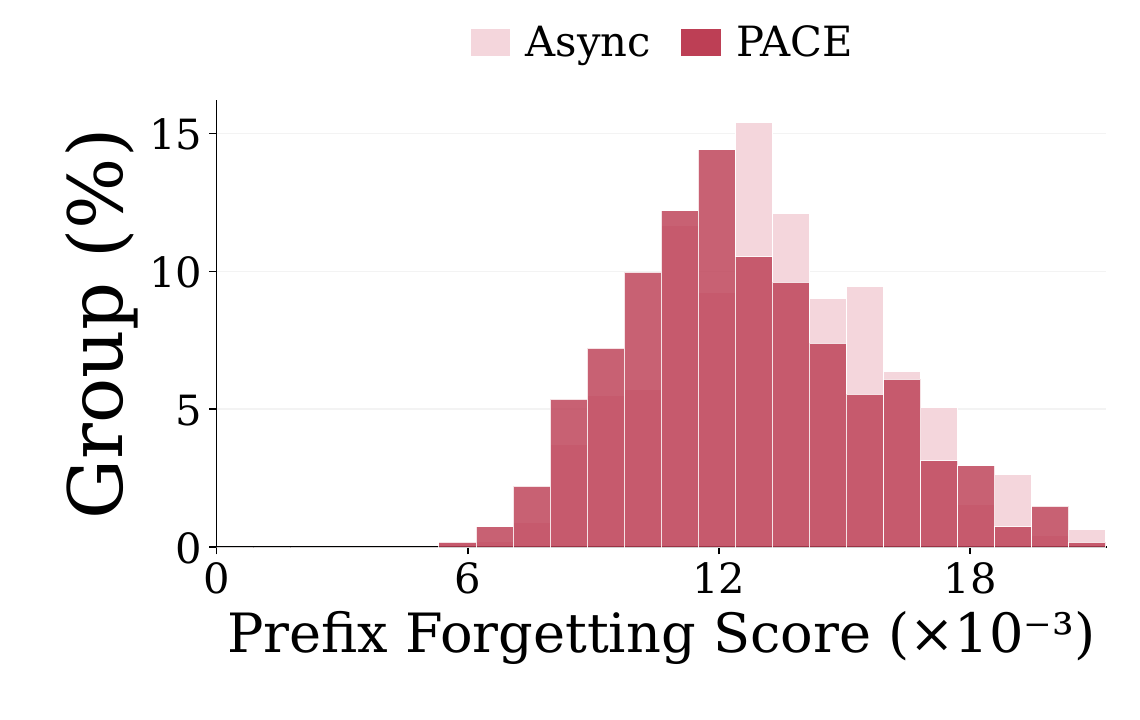}
        \caption{Prefix score.}
        \label{fig:async_pace_prefix}
    \end{subfigure}
    \caption{Distributions of groups entering training for \aname{} and unfiltered Async.
    Within each panel, lighter shading denotes Async; darker shading denotes groups retained by \aname{}.}
    \label{fig:async_pace_metric_distributions}

\end{figure*}

\textbf{Time Cost.} Figure~\ref{fig:retool_time_breakdown} compares the average wall-clock time per training step. \aname{} with raw and effective staleness takes 26.50 s and 28.22 s, respectively, compared with 26.19 s for vanilla Async and 97.01 s for Sync.
The small overhead of effective-staleness filtering comes from prefix rescoring, which partially overlaps with rollout-side waiting after synchronization (hatched region). Overall, \aname{} retains most of the wall-time benefit of asynchronous execution while improving training performance.

\begin{figure}
    \centering
    \includegraphics[
width=\linewidth,trim={0bp 0.3cm 0bp 0.5cm},
clip]{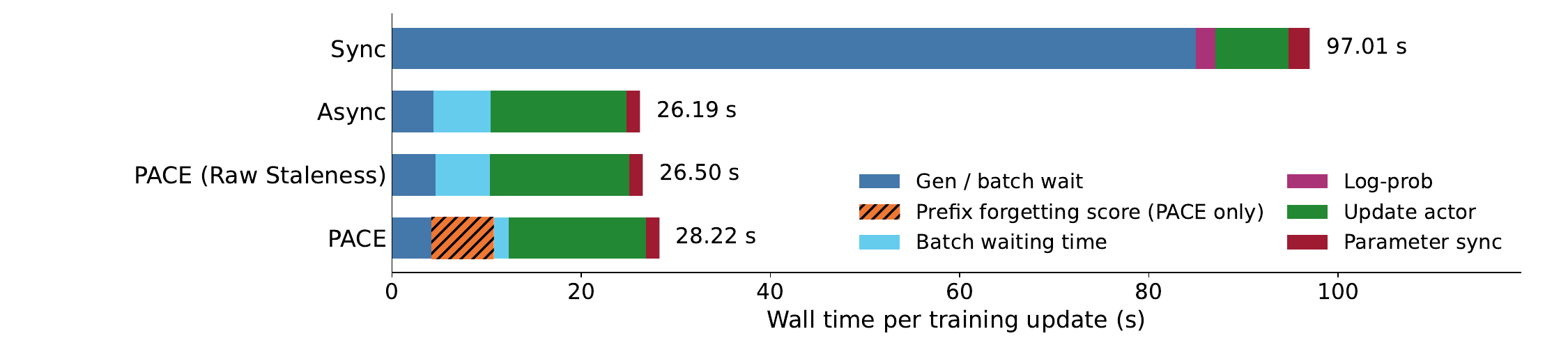}
    \caption{Wall-clock time breakdown per training update. All methods use six GPUs: Sync colocates rollout and training on all six, while asynchronous methods allocate four to rollout and two to training. Bars partition the trainer's elapsed update time into non-overlapping intervals that sum to the total. Hatched regions indicate prefix-scoring activity that overlaps rollout-related waits.}
    \label{fig:retool_time_breakdown}
\end{figure}

\textbf{Model Scaling.}
We further evaluate \aname{} on Qwen3-30B-A3B-Base~\citep{yang2025qwen3}, extending our study from dense 4B/8B backbones to a mixture-of-experts architecture with a larger total parameter count. As shown in Figure~\ref{fig:moe_scaling}, \aname{} achieves higher AIME 2024 accuracy than Async during training, while remaining competitive with Sync. Its per-step training time stays close to Async and substantially below the Sync baseline. 

\begin{figure}
    \centering
    \includegraphics[width=0.47\linewidth]{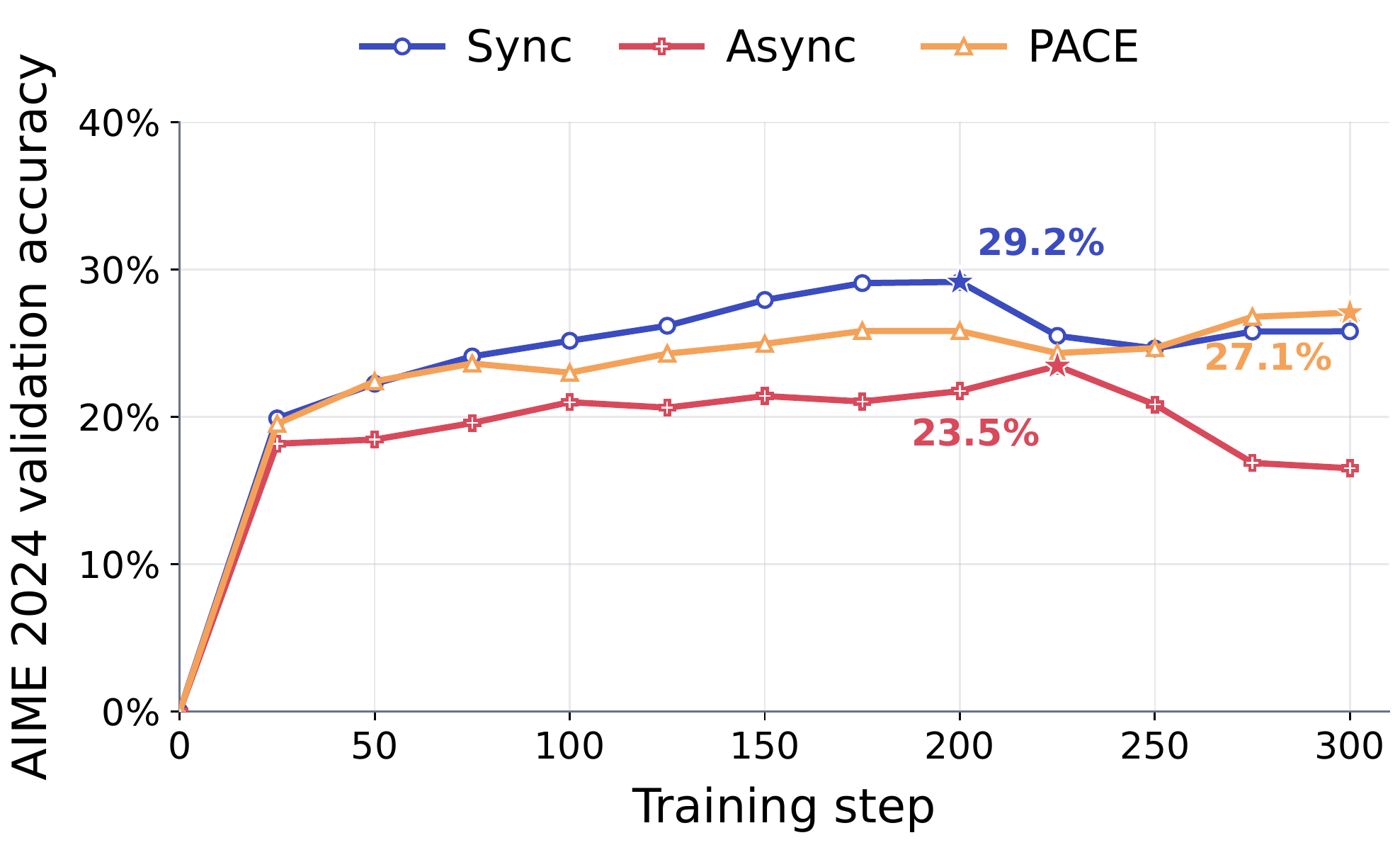}
    \includegraphics[width=0.47\linewidth]{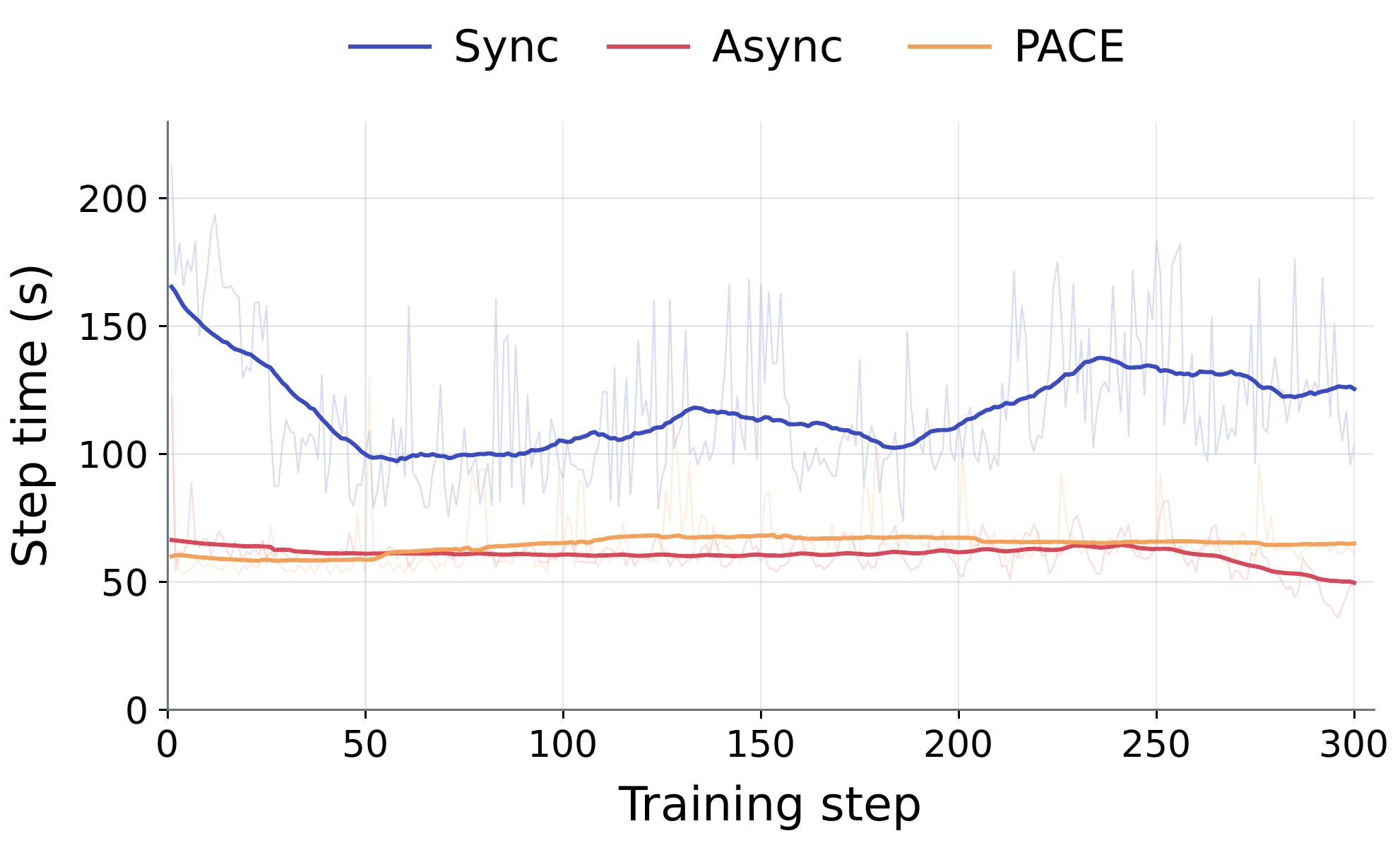}
    \caption{Scaling to Qwen3-30B-A3B-Base. Left: validation accuracy over 300 training iterations;  Right: per-iteration training time, excluding validation and checkpoint saving. Timing curves use a 50-iteration moving average, with faint traces showing raw times.}
    \label{fig:moe_scaling}
\end{figure}

\textbf{Ablation Studies.}
We examine staleness ranking, model scale, and the sharpening parameter.
\textbf{Staleness Scores.}
Table~1 compares Generation-Staleness, Raw-Staleness, and Effective-Staleness ranking. The effective-staleness score achieves the highest average validation accuracy, supporting the inclusion of Waiting Staleness and prefix-aware weighting of Generation Staleness.
\textbf{Model Scale.}
We compare Async, \aname{} (Raw Staleness), and \aname{} on the smaller Qwen3-4B backbone.
\textbf{Sharpening Parameter.}
We vary $\gamma\in\{1,2,4\}$ and observe similar validation and training-reward trajectories, suggesting limited sensitivity within the tested range.
Results for model scale and $\gamma$ are provided in Appendix~\ref{app:ablation-study}.

\FloatBarrier

\section{Related Work}
\subsection{Off-policy Optimization for LLMs}
{Off-policy updates can improve sample efficiency, but
heavy-tailed importance ratios can increase policy-gradient
variance~\citep{li2025st}. Existing methods address this through replay
design or control of importance weights. TOPR~\citep{roux2025tapered}
asymmetrically tapers REINFORCE~\citep{williams1992simple} importance
weights. ARPO~\citep{lu2025arpo} combines fresh and replayed samples to
reduce zero-advantage GRPO groups, while RePO~\citep{li2025repo} selects
past outputs using recency and reward. CISPO~\citep{chen2025minimax}
clips token-level importance weights, while GSPO~\citep{zheng2025group}
clips length-normalized sequence ratios.
M2PO~\citep{zheng2025prosperity} masks outlying tokens until the
{batch-level second moment of the importance weights} falls below a
threshold. In asynchronous training, off-policy data also arises without
explicit replay, as the trainer advances during rollout generation and
subsequent waiting.}

\subsection{Asynchronous RL Systems}
{Asynchronous RL systems overlap rollout generation with policy
optimization~\citep{hu2026dora,nemo-rl,primeintellect2025prime-rl,griggs2025skrylv01,cao2025skyrl,sheng2024hybridflow}.
APRIL~\citep{zhou2025april} mitigates stragglers by retaining unfinished
rollouts and resuming them in later iterations.
AReaL~\citep{fu2025areal} fully decouples generation from training and
pools trajectories whose segments may use different policy versions.
For long-horizon GRPO~\citep{shao2024deepseekmath},
SAO~\citep{hou2026sao} removes the group-level synchronization barrier
and corrects delayed rollouts using clipped token-level importance
ratios. PACE addresses trajectory admission in this setting: pool
occupancy determines the rejection budget, and effective staleness
determines the rejection order.}

\section{Conclusion}
\label{sec:discussion_conclusion}

In this work, we study where staleness accumulates in fully asynchronous RL and decompose policy lag into Generation and Waiting Staleness. Based on this distinction, we propose \aname{}, which uses pool occupancy to determine how much data to reject and effective staleness to determine which trajectories to reject. The score retains the full waiting penalty and uses prefix drift to adjust the generation penalty, avoiding rejection based on generation age alone. Our main experiments on single-turn and multi-turn mathematical reasoning show improved validation performance over unfiltered asynchronous training while preserving most of its wall-clock benefits. Further experiments with mixture-of-experts models and another RL algorithm support the applicability of \aname{} across model architectures and training algorithms. 

\newpage

\bibliography{main}
\bibliographystyle{plainnat}

\clearpage
\beginappendix
\raggedbottom
\setlength{\textfloatsep}{12pt plus 2pt minus 2pt}
\setlength{\floatsep}{10pt plus 2pt minus 2pt}
\setlength{\intextsep}{10pt plus 2pt minus 2pt}
\setlength{\dbltextfloatsep}{12pt plus 2pt minus 2pt}
\setlength{\dblfloatsep}{10pt plus 2pt minus 2pt}

\section{Proofs}
\label{app:proofs}

\subsection{Proof of Lemma~\ref{lem:staleness-decomposition}}
\label{app:proof-staleness-decomposition}

\begin{proof}
Starting from the definition of staleness in
Equation~\ref{eq:token-weighted-lag}, we add and subtract the completion
index $c(\tau)$ inside each token-level lag:
\begin{align*}
    \bar k(\tau;j)
    &=
    \frac{1}{T}\sum_{t=1}^{T}\left(j-v_t\right) \\
    &=
    \frac{1}{T}\sum_{t=1}^{T}
    \left[
        \left(c(\tau)-v_t\right)
        +
        \left(j-c(\tau)\right)
    \right] \\
    &=
    \frac{1}{T}\sum_{t=1}^{T}
    \left(c(\tau)-v_t\right)
    +
    \frac{1}{T}\sum_{t=1}^{T}
    \left(j-c(\tau)\right) \\
    &=
    \frac{1}{T}\sum_{t=1}^{T}
    \left(c(\tau)-v_t\right)
    +
    j-c(\tau).
\end{align*}
The first term is $K_{\mathrm{gen}}(\tau)$ and the second is
$K_{\mathrm{wait}}(\tau;j)$, which proves
Equation~\ref{eq:staleness-decomposition}. Moreover,
$v_t\leq c(\tau)\leq j$ implies that both components are nonnegative.
\end{proof}

\section{Additional Experimental Results}
\label{app:additional-results}

\begingroup
\subsection{Experimental Settings}
\label{app:experimental-settings}

{We describe the Qwen3-8B and ReTool runs in Table~\ref{tab:best_step300_results}, together with the additional Qwen3-4B and MoE experiments. Tables~\ref{tab:shared-experimental-settings} and \ref{tab:task-experimental-settings} separate shared choices from task-specific settings, including their evaluation intervals.}

\begin{table}[!htbp]
    \centering
    \caption{Shared settings for dense math, MoE, and ReTool experiments. {The clipping parameters apply to the standard policy objective; M2PO uses its own policy loss.}}
    \label{tab:shared-experimental-settings}
    \small
    \renewcommand{\arraystretch}{1.12}
    \begin{tabular}{ll}
        \toprule
        Setting & Value \\
        \midrule
        Training data & DAPO-MATH-17K \\
        Advantage estimator & GRPO with within-group standardization \\
        Peak learning rate & $10^{-6}$; constant after warmup \\
        Adam momentum coefficients & $(0.9, 0.999)$ \\
        Epochs per training batch & $1$ \\
        Loss aggregation & Token mean \\
        Lower / upper clipping widths & $0.20$ / $0.28$ \\
        Dual-clip coefficient & $10$ \\
        Gradient-norm clipping & $1.0$ \\
        KL penalty / entropy bonus & Disabled / disabled \\
        Training temperature / top-$p$ & $1.0$ / $1.0$ \\
        Maximum prompt length & $2{,}048$ tokens \\
        Generation engine & vLLM \\
        Dynamic accuracy-based group sampling & Disabled \\
        \bottomrule
    \end{tabular}
\end{table}

\paragraph{Shared optimization protocol.}
These experiments use group-relative advantages without a learned critic.
The standard objective combines asymmetric clipping with a token-mean
reduction; dynamic sampling that discards groups with identical rewards
is not enabled. Such accuracy-based sampling is distinct from
\aname{}'s staleness-based admission. In the dense math and current MoE
runs, the policy ratio uses recorded rollout log probabilities as its
behavior-policy reference. No additional rollout importance-weighting
or rejection-sampling correction is applied; the importance ratio in
the policy objective itself remains present.

\begin{table}[!htbp]
    \centering
    \caption{Task-specific settings. ReTool denotes the main GRPO-based runs; the separate REINFORCE++ comparison is described in Appendix~\ref{app:reinforcepp-ablation}.Validation periods are given in logged trainer steps; their conversion to optimizer updates is described below.}
    \label{tab:task-experimental-settings}
    \small
    \setlength{\tabcolsep}{4pt}
    \renewcommand{\arraystretch}{1.13}
    \begin{tabular}{lccc}
        \toprule
        Setting & Math (4B/8B) & MoE (30B-A3B) & ReTool (4B) \\
        \midrule
        Trainer backend & FSDP & Megatron & FSDP \\
        Optimizer & AdamW & Megatron Adam & AdamW \\
        Weight decay & $0.1$ & $0.1$ & $0.01$ \\
        Warmup (optimizer updates) & $10$ & $20$ & $0$ \\
        Response budget (tokens) & $8{,}192$ & $20{,}480$ & $8{,}192$ \\
        Responses per training prompt & $12$ & $16$ & $8$ \\
        Prompts per optimizer minibatch & $12$ & $24$ & $8$ \\
        Trajectories per optimizer minibatch & $144$ & $384$ & $64$ \\
        Rollout tensor parallelism & $1$ & $2$ & $1$ \\
        Validation responses per problem & 32 & 32 & $32$ \\
        Validation temperature / top-$p$ & $1.0/0.7$ & $1.0/0.7$ & $1.0/0.6$ \\
        Validation period & $50$ & $25$ & $25$ \\
        \bottomrule
    \end{tabular}
\end{table}

\paragraph{Single-turn math: Qwen3-4B.}
We initialize from Qwen3-4B and use four GPUs for rollout and two
for training. Each trainer update consumes 12 complete prompt groups,
followed by weight synchronization. The Sync baseline uses the same
disjoint resource layout with a zero-staleness supply setting;
Async and \aname{} use a supply threshold of eight and allow partial
rollouts across weight synchronizations. This threshold controls
outstanding work rather than imposing a strict upper bound on every
trajectory's measured staleness. The configured horizon is 400 updates, with no initial validation. Training uses the DAPO answer verifier, with signed-integer normalization and an overlong-response penalty applied over the final 4,096 tokens of
the response budget, with penalty factor one. Evaluation reports
unshaped answer accuracy on AIME 2023/2024/2025, BeyondAIME, BrUMO 2025, and HMMT 2025: 250 problems in total.

\paragraph{Single-turn math: Qwen3-8B.}
{We initialize from Qwen3-8B and retain the Qwen3-4B training settings, including the data, reward, optimizer, sequence-length limits, sampling parameters, and admission configuration. The main results use mean@32} The resource allocation spans two nodes, with 12 GPUs for rollout and four for training (12r4t).

\paragraph{Mixture-of-experts: Megatron 12r12t.}
We initialize from Qwen3-30B-A3B-Base. The asynchronous configuration allocates 12 GPUs to rollout and 12 to training; the matched synchronous configuration colocates generation and training on 24 GPUs. Training uses tensor and expert parallel degrees of two, with pipeline and context parallel degrees of one. Expert-routing replay and hybrid resource switching are disabled.
Each logged trainer iteration processes 96 prompt groups in four
minibatches of 24 groups. The asynchronous trainer synchronizes weights after these four optimizer updates. The continuous-rollout inventory is bounded at nine training batches (864 prompt groups, including unfinished work). The configured stopping point is 300 trainer iterations, {with AIME 2024 and AIME 2025 evaluated every 25 iterations. The plotted run uses mean@32.} Training uses the DAPO reward with a 4,096-token overlong buffer and penalty factor one.

\paragraph{Multi-turn reasoning: ReTool.}
We initialize from Qwen3-4B-Instruct-2507. The selected Sync run uses four GPUs and a data batch of 32 prompts, split into four optimizer minibatches of eight prompts. Async consumes one eight-prompt minibatch per logged step on a 2r2t allocation. The selected \aname{} variants and M2PO use 4r2t, with the same optimizer minibatch and eight responses per prompt. Accordingly, ReTool Sync evaluations are spaced by 100 optimizer updates, whereas asynchronous evaluations are spaced by 25; the displayed optimizer-step axis accounts for this difference.

The response budget spans the complete multi-turn interaction.
Assistant and tool-response turn limits are each set to eight, with one tool call processed at a time. The Hermes tool-calling template, SandboxFusion feedback, and turn-shaped training reward are specified in Appendix~\ref{app:retool-system-instruction}. Tool feedback remains in the context but is masked from the policy loss. Validation reports unshaped mean@32 accuracy on the same six benchmarks as the math evaluation. M2PO retains GRPO advantages and uses its core loss with budget $0.04$.

\paragraph{Admission and prefix-scoring parameters.}
The controllers above use a 512-group score window, a minimum of 32
observations, a five-update throughput window, and a maximum rejection budget of $0.9$. These are controller settings, not a fixed realized drop fraction. Full \aname{} in the dense math and ReTool experiments uses $\gamma=4$ and mean aggregation within groups. Prefix scores are collected at partial-rollout interruptions, requiring at least 32 eligible tokens; the scoring caps are 1,024 response tokens for math and 2,048 for ReTool. {\aname{} (Raw Staleness) omits prefix scoring and uses version lag directly.}

\FloatBarrier
\endgroup

\subsection{{Pseudocode}}
{Algorithm~\ref{alg:pace} summarizes the two-stage procedure in Section~\ref{sec:method}: rollout workers record policy-version and prefix-score metadata, while the trainer applies the pool-aware rejection budget to completed trajectories. The trainer retains data until a full batch is formed, then performs the original policy update and weight synchronization.}

\begin{algorithm}[!htbp]
\caption{\aname{}: Pool-Aware Control of Effective Staleness}
\label{alg:pace}
\begin{algorithmic}
\Require Completed-trajectory pool $\mathcal{Q}$, target occupancy $N^{\mathrm{tar}}$, trainer batch size $B_{\mathrm{tr}}$, smoothing coefficient $\beta$, and sharpening parameter $\gamma$.

\Statex \textbf{Rollout-side metadata collection}
\State Generate trajectories while recording token-level behavior versions and log probabilities.
\State When a prefix is rescored after a policy synchronization, attach its Prefix Forgetting Score to the trajectory; do not reject the rollout at this stage.
\State Insert each completed trajectory and its generation metadata into $\mathcal{Q}$.

\Statex \textbf{Trainer-side admission}
\For{training step $j=1,2,\ldots$}
    \State Compute the pool-aware rejection rate $r_j$ and its smoothed value $\widehat r_j$.
    \State {Compute the empirical percentile $q_j(\tau)$, apply $F_j=\phi_\gamma(q_j(\tau))$, and form $K_{\mathrm{eff}}=K_{\mathrm{wait}}+F_jK_{\mathrm{gen}}$.}
    \State Set $\kappa_j^M$ to the $(1-\widehat r_j)$-quantile of recent effective-staleness scores, with $M=K_{\mathrm{eff}}$.
    \State Draw from $\mathcal{Q}$, reject trajectories with $K_{\mathrm{eff}}(\tau;j)>\kappa_j^M$, and retain the rest until $|\mathcal{B}_j|=B_{\mathrm{tr}}$.
    \State Update the policy using $\mathcal{B}_j$ and periodically synchronize the latest weights to rollout workers.
\EndFor
\end{algorithmic}
\end{algorithm}

\subsection{Per-Benchmark Training Dynamics}
Figure~\ref{fig:six_benchmark_dynamics} reports the complete
validation trajectories over the first 300 training steps. The full
\aname{} method generally remains above unfiltered Async across the
six benchmarks, showing that the improvement in
Table~\ref{tab:best_step300_results} is not produced by a single
isolated checkpoint or benchmark. The advantage is particularly
visible on AIME 2024, BeyondAIME, and BrUMO 2025. HMMT 2025 exhibits
greater checkpoint-to-checkpoint variation, where {\aname{} (Raw Staleness)}
slightly outperforms the full method at the selected checkpoint.
Nevertheless, {\aname{}} provides the strongest
aggregate performance and the most consistent improvement over
unfiltered asynchronous RL.

\begin{figure*}[!htbp]
    \centering
    \includegraphics[width=\textwidth]
    {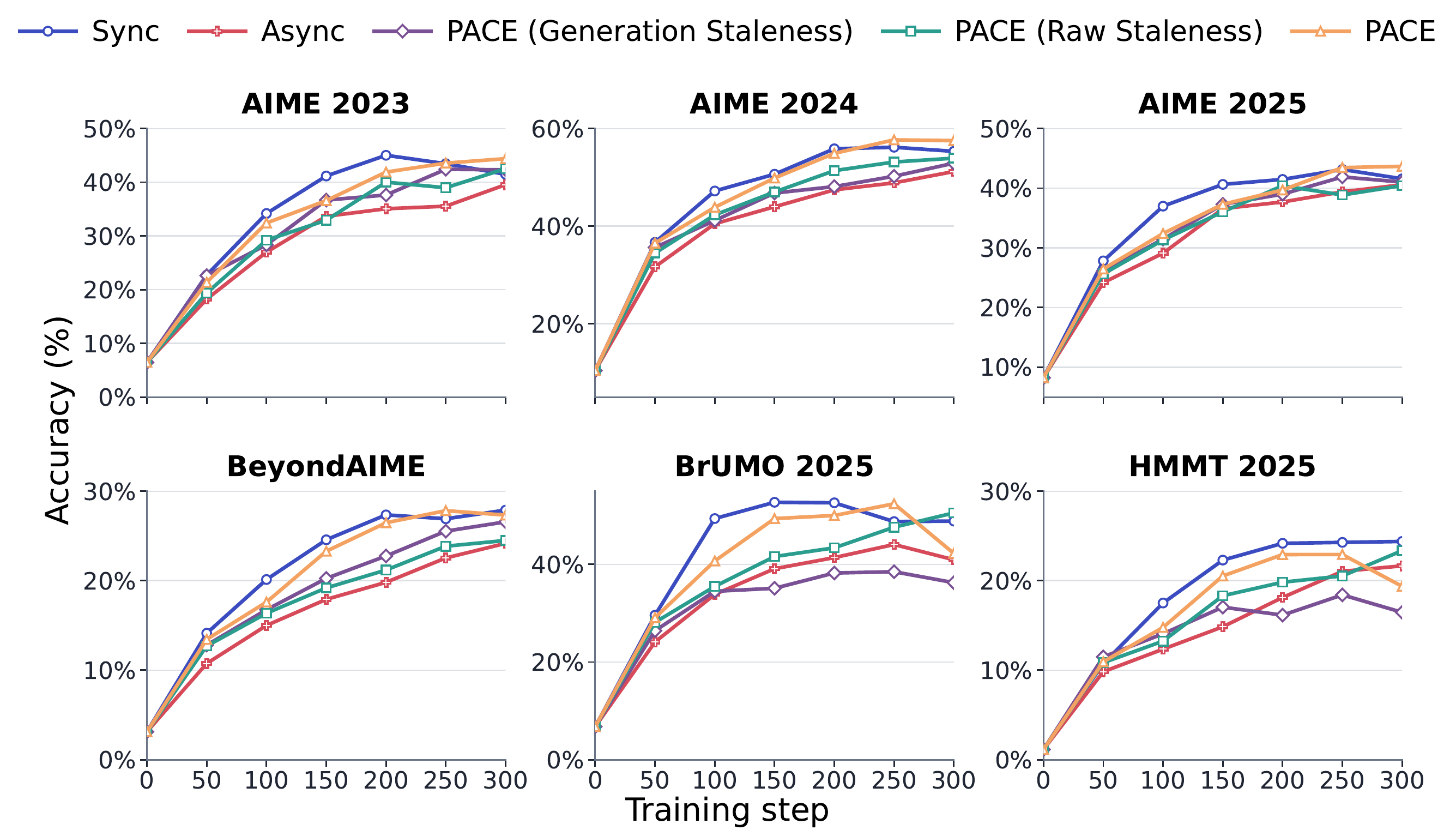}
    \caption{
    Mean@32 validation performance over the first 300 training steps
    on six math benchmarks. 
    Sync, Async, {\aname{} (Generation Staleness)}, {\aname{} (Raw Staleness)}, and {\aname{}}.
    }
    \label{fig:six_benchmark_dynamics}
\end{figure*}

\subsection{ReTool Validation Curves}
\label{app:retool-validation-curves}
Figure~\ref{fig:retool_validation_accuracy} provides the per-benchmark trajectories for the ReTool comparison in
Table~\ref{tab:best_step300_results}. The horizontal axis counts optimizer updates, with four updates per logged Sync step and one per logged Async or \aname{} step. 

\begin{figure*}[!htbp]
    \centering
    \includegraphics[width=\textwidth]
    {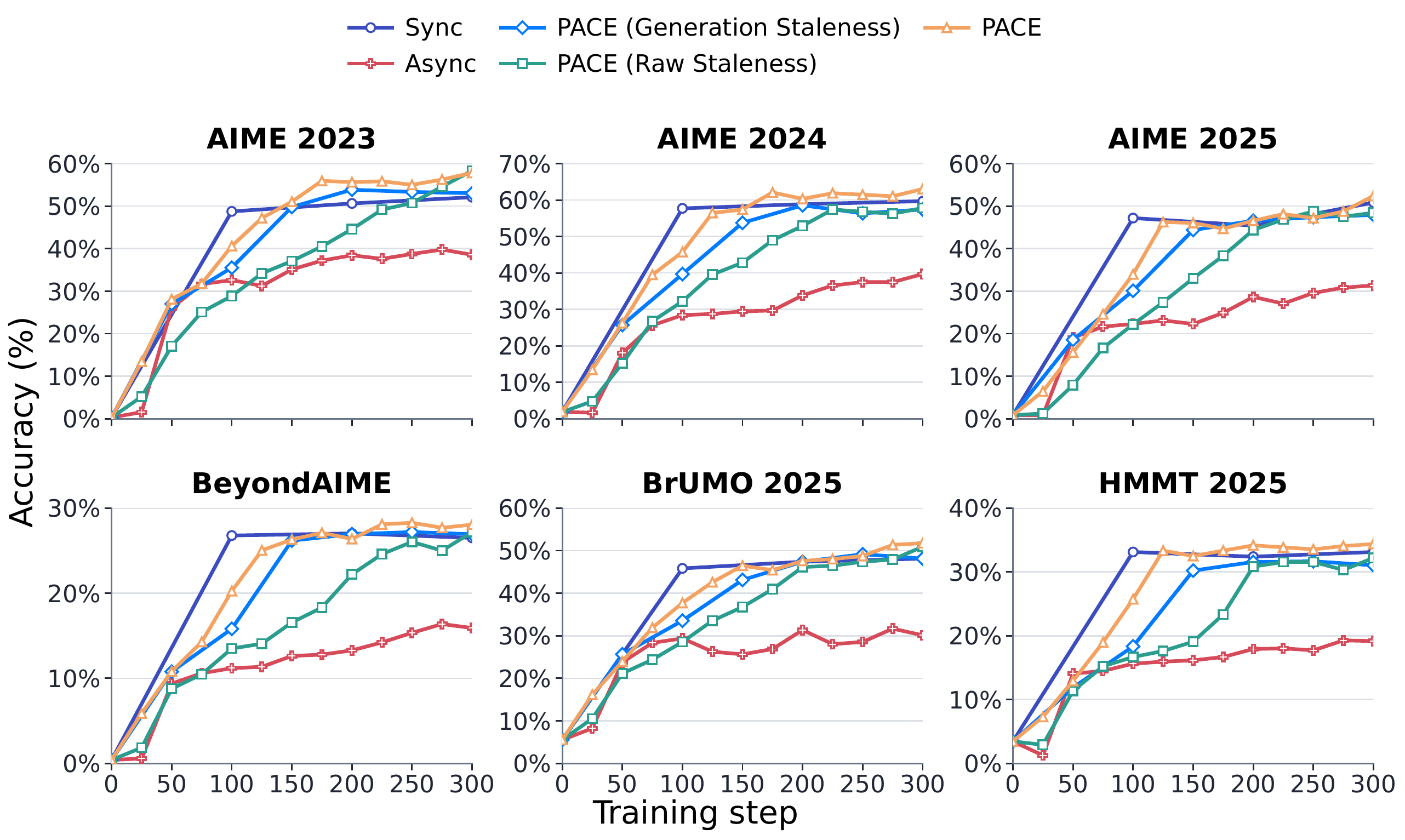}
    \caption{
    ReTool mean@32 validation accuracy over the first {300} optimizer updates on six benchmarks for Sync, Async, {\aname{} (Generation Staleness)}, \aname{} (Raw Staleness), and {\aname{}}.
    }
    \label{fig:retool_validation_accuracy}
\end{figure*}

\FloatBarrier

\section{Selection Bias under Raw-Staleness Filtering}
\label{app:staleness-selection-bias}

Figure~\ref{fig:policy-lag-selection-motivation} illustrates a potential selection bias introduced by raw-staleness filtering. In the left panel, \aname{} (Raw Staleness) achieves higher late-stage training reward than Sync under otherwise identical training settings. In the separate fixed-policy ReTool probe shown in the right panel, lower-accuracy prompt groups tend to have larger raw staleness. Since raw-staleness ranking assigns higher rejection priority to larger policy-version gaps, it can preferentially remove harder examples. Together, these observations suggest that the higher training reward may partly reflect a shift toward easier training data. Prior work also suggests that persistently favoring easier examples can limit further policy improvement~\citep{parashar2026curriculum,bae2026online}.

These observations motivate giving greater tolerance to Generation Staleness. Harder problems may require longer or interrupted rollouts and accumulate more generation lag while their prefixes remain compatible with the updated rollout policy. Applying the full generation-age penalty can therefore reject difficult but potentially informative trajectories. \aname{} discounts this penalty when the Prefix Forgetting Score indicates relatively small policy drift, while retaining the full Waiting Staleness penalty to account for policy updates after rollout completion.

\begin{figure*}[!htbp]
    \centering
    \begin{minipage}[t]{0.485\textwidth}

        \centering
        \includegraphics[width=\linewidth]{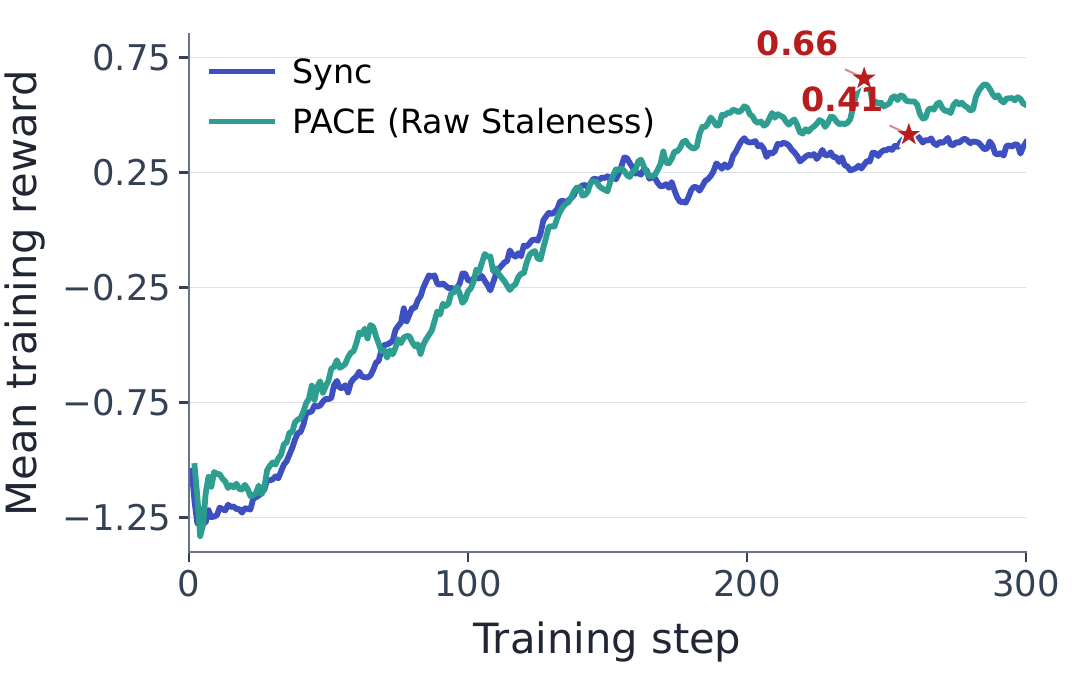}
        \par\smallskip
        {\small (a) Training reward.}
    \end{minipage}
    \hfill
    \begin{minipage}[t]{0.485\textwidth}

        \centering
        \includegraphics[width=\linewidth]{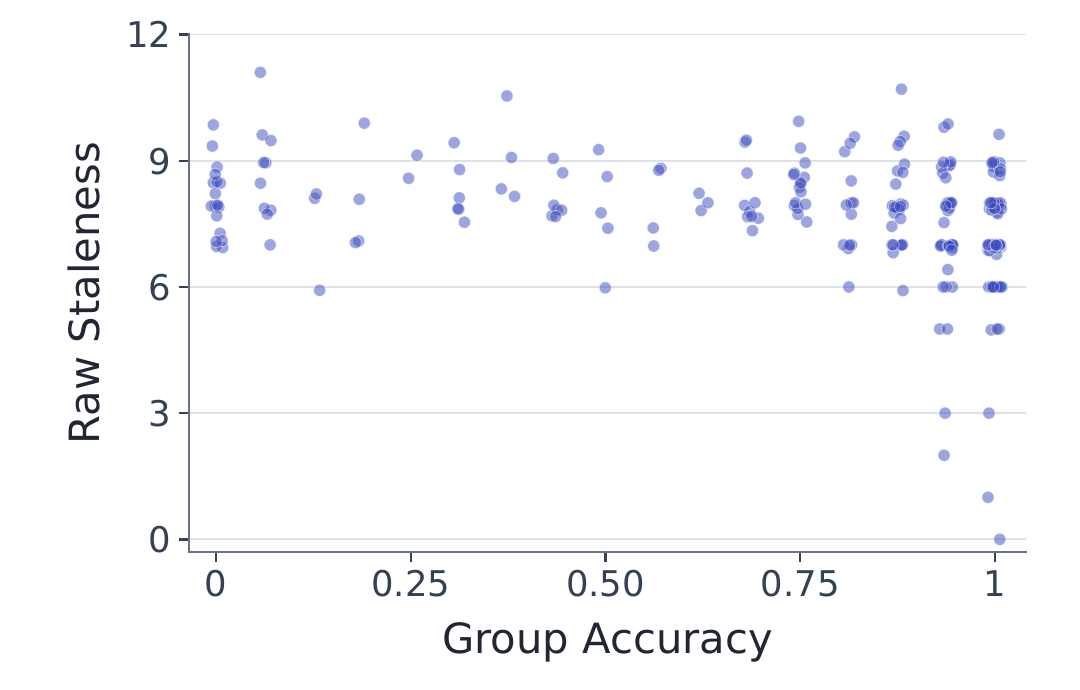}
        \par\smallskip
        {\small (b) Accuracy and raw staleness.}
    \end{minipage}
    \caption{Left: DAPO-MATH-17K training reward for Sync and \aname{} (Raw Staleness). Right: group accuracy versus raw staleness in a fixed-policy ReTool probe.}
    \label{fig:policy-lag-selection-motivation}
\end{figure*}

\subsection{Ablation Study}
\label{app:ablation-study}

\subsubsection{Ablation on Model Scale.}
{Figure~\ref{fig:validation_ablation_panels} compares Async, \aname{} (Raw Staleness), and \aname{} using Qwen3-4B. It reports AIME 2024 and AIME 2025 mean@32 validation accuracy over the first 300 training steps.} 

\begin{figure}[H]
    \centering
    \begin{minipage}[t]{0.485\textwidth}

        \centering
        \includegraphics[width=\linewidth]{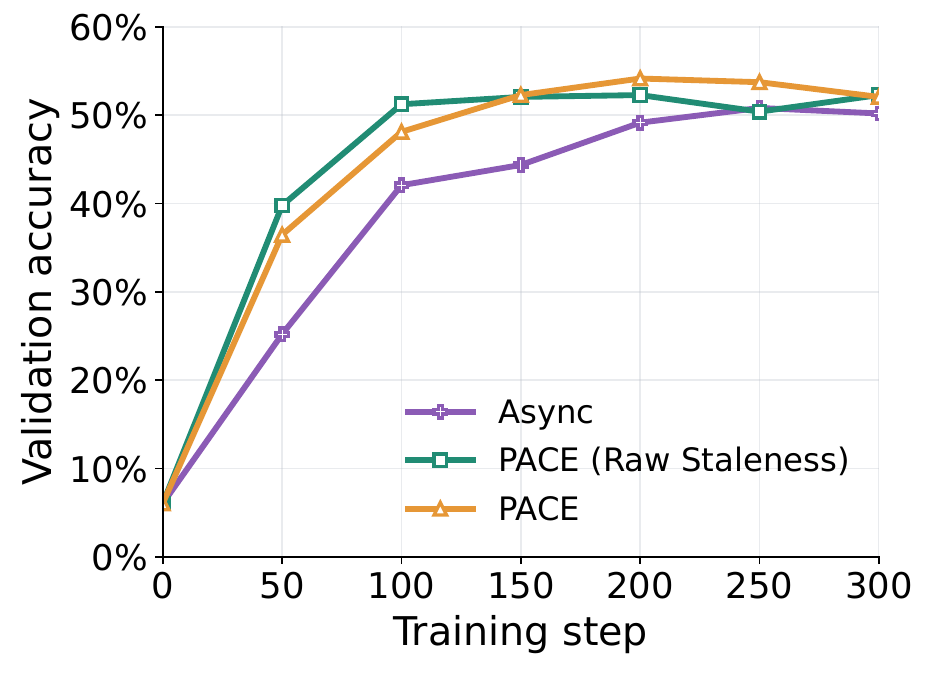}
        \small (a) AIME 2024.
    \end{minipage}
    \hfill
    \begin{minipage}[t]{0.485\textwidth}

        \centering
        \includegraphics[width=\linewidth]{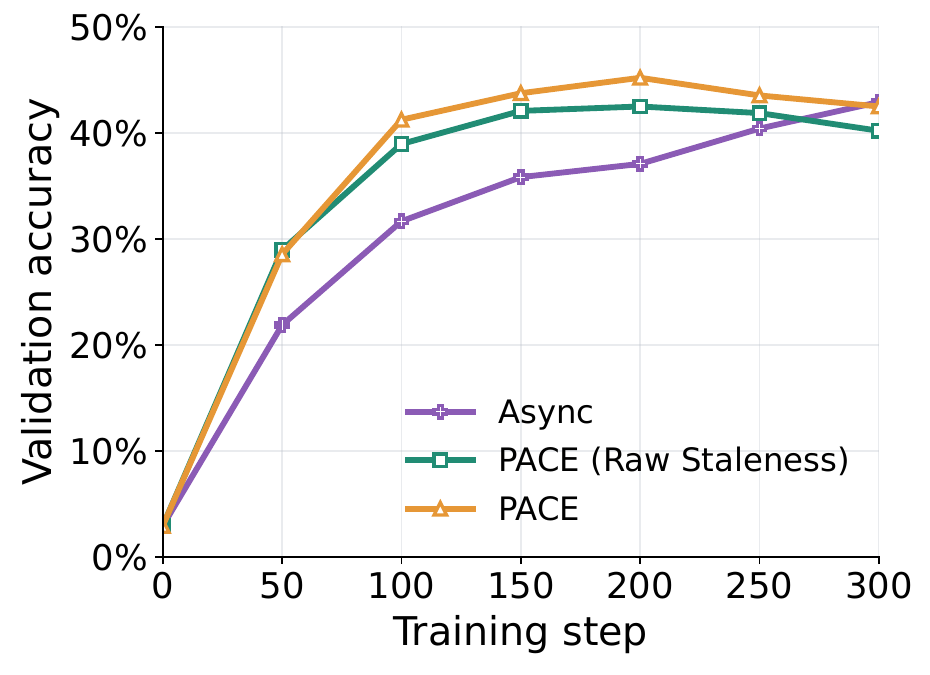}
        \small (b) AIME 2025.
    \end{minipage}
    \caption{Qwen3-4B validation accuracy versus training step on AIME 2024 (left)
    and AIME 2025 (right) over the first 300 training steps. Curves show the local mean@32 for
    Async, \aname{} (Raw Staleness), and \aname{}.
    The common measured initial-model evaluation is reused at step zero as a display reference.}
    \label{fig:validation_ablation_panels}
\end{figure}

\subsubsection{Sensitivity to the Parameter
\texorpdfstring{$\boldsymbol{\gamma}$}{γ}.}
\label{app:gamma_sensitive}
We next vary the sharpening parameter over
$\gamma\in\{1,2,4\}$. Recall that $\gamma=1$ recovers the
unsharpened quantile weight, whereas larger values increase the
polarization between prefixes with low and high forgetting scores.
As shown in Figure~\ref{fig:gamma_ablation_panels}\textbf{(a)},
the AIME 2024 validation curves follow similar optimization
trajectories. Their best mean@32 accuracies are $56.88\%$,
$55.31\%$, and $58.13\%$ for $\gamma=1$, $2$, and $4$,
respectively, giving a maximum difference of only $2.82$ percentage
points. The corresponding training-reward curves are provided in
Appendix Figure~\ref{fig:gamma_ablation_panels}\textbf{(b)}.

These results indicate that \aname{} is not particularly sensitive to
the precise value of $\gamma$ within the tested range. In particular,
changing the polarization strength of the quantile-sharpening function
has only a limited effect on the overall optimization dynamics.
Because $\gamma=4$ achieves the highest observed validation accuracy
without destabilizing training, we use $\gamma=4$ as the default in
all main experiments.

\begin{figure}[H]
    \centering
    \begin{minipage}[t]{0.485\textwidth}

        \centering
        \includegraphics[width=\linewidth]{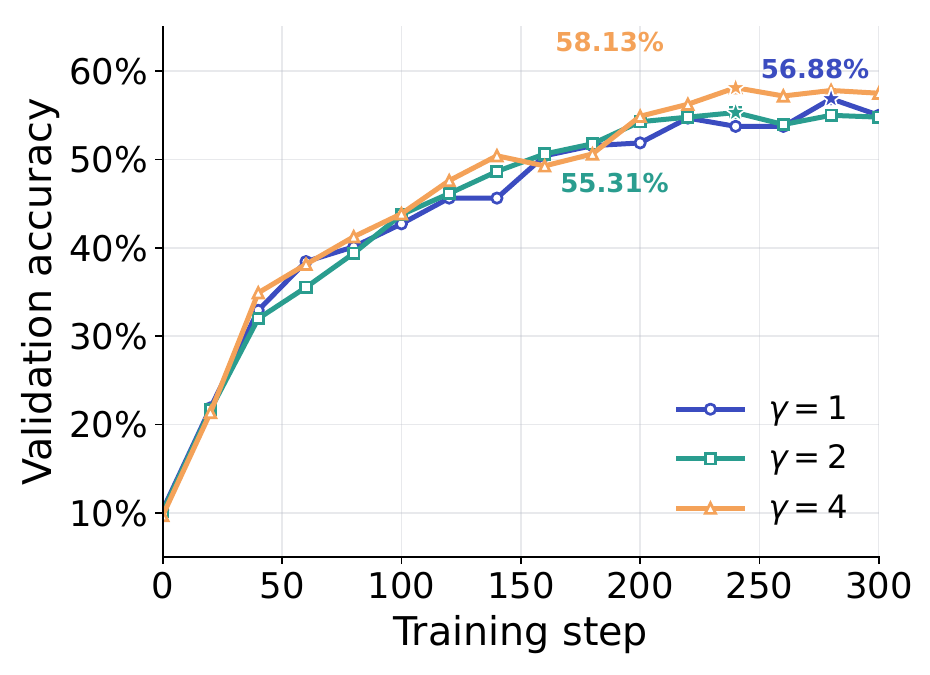}
        \small (a) Validation accuracy.
    \end{minipage}
    \hfill
    \begin{minipage}[t]{0.485\textwidth}

        \centering
        \includegraphics[width=\linewidth]{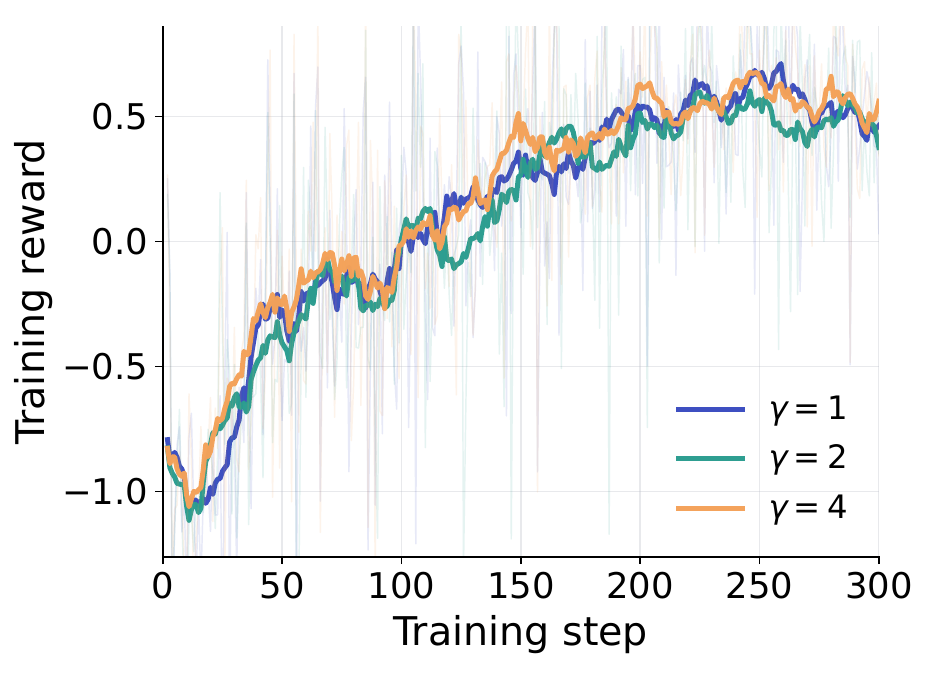}
        \small (b) Training reward.
    \end{minipage}
    \caption{Sensitivity to $\gamma\in\{1,2,4\}$ on Qwen3-8B.
    Left: AIME 2024 mean@32 validation accuracy.
    Right: DAPO-MATH-17K training reward over the first 300 training steps.}
    \label{fig:gamma_ablation_panels}
\end{figure}

\subsubsection{Other RL Algorithms}
\label{app:reinforcepp-ablation}
To examine whether PACE could be applied beyond GRPO, we compare Async with PACE using REINFORCE++ in the ReTool setting with Qwen3-4B-Instruct-2507. Both runs use the same learning rate, minibatch size, rollout group size, and seed, with an asynchronous staleness threshold of eight.  As shown in Figure~\ref{fig:retool-reinforcepp-ablation}, this variant attains higher validation accuracy on both AIME benchmarks later in training. This comparison supports the applicability of pool-aware admission control beyond GRPO;
\begin{figure}[!htbp]
    \centering
    \includegraphics[width=0.9\linewidth]{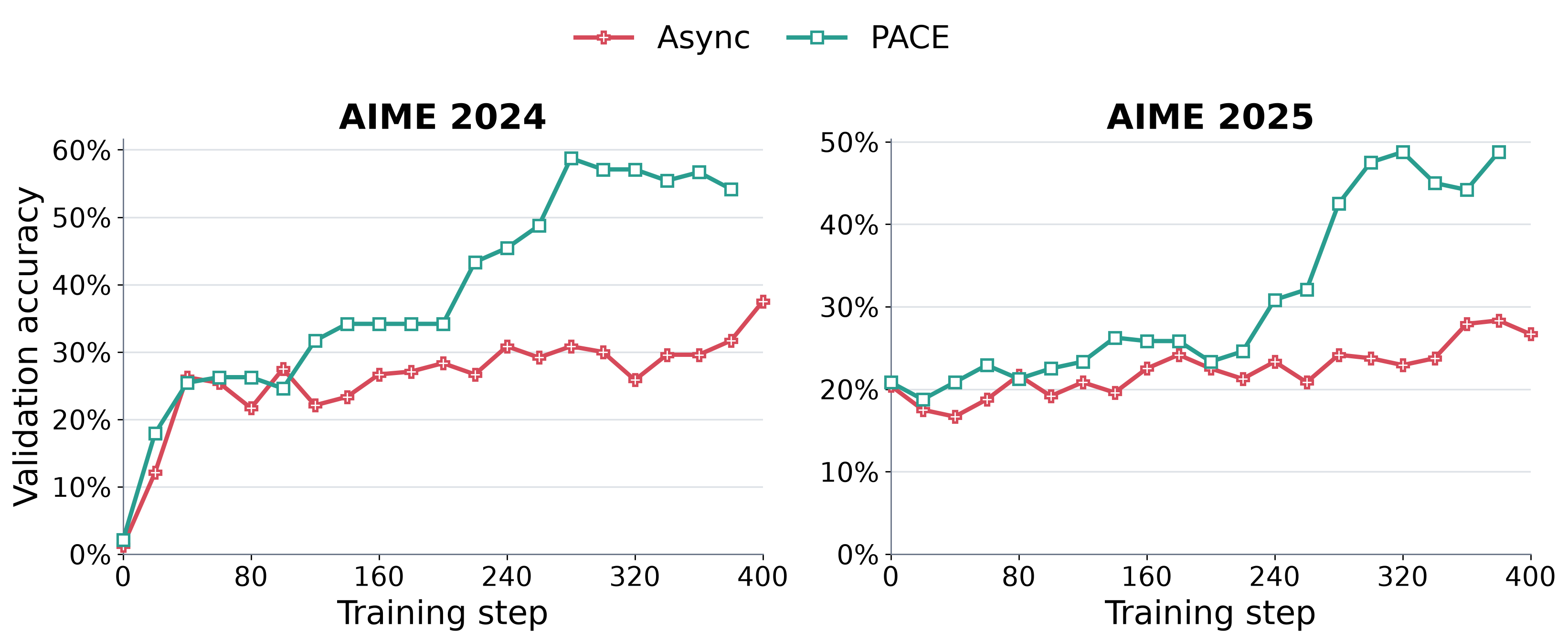}
    \caption{REINFORCE++ on ReTool: AIME 2024 (left) and AIME 2025 (right) validation accuracy versus optimization step.}
    \label{fig:retool-reinforcepp-ablation}
\end{figure}

\FloatBarrier
\section{ReTool System Instruction and Tool Interface}
\label{app:retool-system-instruction}

The training and validation datasets contain a user message, not a
separate hand-written system message. Qwen3-4B-Instruct-2507's tokenizer
constructs the system instruction below from the registered
\texttt{code\_interpreter} schema. We show the resulting prompt,
including the user-message suffix that requests a boxed answer;
\texttt{\{problem\}} is the only substituted field. Line wrapping is
for readability. The same template is used for Sync, Async, and \aname{}.

{The system instruction defines the code-execution interface and the format of tool-call messages. Execution feedback is returned to the model for subsequent reasoning, and the user-message suffix specifies the boxed-answer format used by the verifier.}

\paragraph{Execution and feedback.}
Each tool request runs a self-contained Python program in a stateless
SandboxFusion call, with a 30-second default timeout and a 1024-MB
memory limit. The server permits up to 16 concurrent executions across
rollouts. Tool replies are serialized using \texttt{<tool\_response>}
tags. Replies longer than 256 characters retain their first and last
128 characters, separated by a truncation marker; this limit is in
characters, not tokens. Feedback tokens remain in the context but do
not contribute to the policy loss.

\paragraph{Reward and validation.}
The shared reward checks the final boxed answer, assigning $1$ when
correct. Otherwise it assigns
$\min\{-0.6,-1+0.05(T-2)\}$, where $T$ is the logged total turn count
(initial user message, assistant turns, and tool-response turns).
The reported validation metric instead uses the unshaped binary
correctness indicator, averaged over 32 responses per problem. The
validation sets contain 30 problems each for AIME 2023, AIME 2024,
AIME 2025, BrUMO 2025, and HMMT 2025, and 100 for BeyondAIME.

\begin{table}[!htbp]
\centering
\caption{Prompt template and tool interface used for ReTool
training and evaluation. The placeholder \texttt{\{problem\}}
is replaced with the input question.}
\label{tab:retool-prompt}
\begin{tabular}{@{}p{0.97\linewidth}@{}}
\toprule

\textbf{System message} \\[1mm]
\begin{minipage}[t]{\linewidth}
\begin{lstlisting}[style=retoolprompt,emptylines=0,basicstyle=\fontsize{8}{9}\selectfont\ttfamily\color{PromptText}]
(*@\promptrole{<|im_start|>system}@*)
# Tools

You may call one or more functions to assist with the user query.

You are provided with function signatures within (*@\prompttag{<tools></tools>}@*) XML tags:
(*@\prompttag{<tools>}@*)
{
  "type": "function",
  "function": {
    "name": "code_interpreter",
    "description": "Execute a self-contained Python 3 program for mathematical calculation and verification. The code value must contain raw executable Python source only: do not include Markdown fences, backticks, XML, JSON wrappers, shell commands, or explanatory prose. Explicitly print every result needed in the response. Each call is stateless, so repeat all required imports and definitions. If execution returns an error, correct the code before retrying.",
    "parameters": {
      "type": "object",
      "properties": {
        "code": {
          "type": "string",
          "description": "Raw, self-contained Python 3 source with no Markdown fencing; use print(...) to return results."
        }
      },
      "required": ["code"]
    }
  }
}
(*@\prompttag{</tools>}@*)

For each function call, return a json object with function name and arguments within (*@\prompttag{<tool_call></tool_call>}@*) XML tags:
(*@\prompttag{<tool_call>}@*)
{
  "name": <function-name>,
  "arguments": <args-json-object>
}
(*@\prompttag{</tool_call>}@*)(*@\promptrole{<|im_end|>}@*)
\end{lstlisting}
\end{minipage}
\\[2mm]

\midrule
\textbf{User message} \\[1mm]
\begin{minipage}[t]{\linewidth}
\begin{lstlisting}[style=retoolprompt,emptylines=0,basicstyle=\fontsize{8}{9}\selectfont\ttfamily\color{PromptText}]
(*@\promptrole{<|im_start|>user}@*)
{problem}
The answer format must be: \boxed{'The final answer goes here.'}(*@\promptrole{<|im_end|>}@*)
\end{lstlisting}
\end{minipage}
\\[2mm]

\midrule
\textbf{Assistant generation prefix} \\[1mm]
\begin{minipage}[t]{\linewidth}
\begin{lstlisting}[style=retoolprompt,emptylines=0,basicstyle=\fontsize{8}{9}\selectfont\ttfamily\color{PromptText}]
(*@\promptrole{<|im_start|>assistant}@*)
\end{lstlisting}
\end{minipage}
\\[1mm]

\bottomrule
\end{tabular}
\end{table}

\end{document}